\title{AIDE: An algorithm for measuring the accuracy of probabilistic inference algorithms}
\author{
  Marco F. Cusumano-Towner\\
  Probabilistic Computing Project\\
  Massachusetts Institute of Technology\\
  \texttt{marcoct@mit.edu} \\
    \And
  Vikash K. Mansinghka\\
  Probabilistic Computing Project\\
  Massachusetts Institute of Technology\\
  \texttt{vkm@mit.edu} \\
}
\begin{document}
\maketitle

\begin{abstract}
Approximate probabilistic inference algorithms are central to many fields.
Examples include sequential Monte Carlo inference in robotics, variational inference in machine learning, and Markov chain Monte Carlo inference in statistics.
A key problem faced by practitioners is measuring the accuracy of an approximate inference algorithm on a specific data set.
This paper introduces the auxiliary inference divergence estimator (AIDE), an algorithm for measuring the accuracy of approximate inference algorithms.
AIDE is based on the observation that inference algorithms can be treated as probabilistic models and the random variables used within the inference algorithm can be viewed as auxiliary variables.
This view leads to a new estimator for the symmetric KL divergence between the approximating distributions of two inference algorithms.
The paper illustrates application of AIDE to algorithms for inference in regression, hidden Markov, and Dirichlet process mixture models.
The experiments show that AIDE captures the qualitative behavior of a broad class of inference algorithms and can detect failure modes of inference algorithms that are missed by standard heuristics.
\end{abstract}

\section{Introduction}
\label{sec:intro}
Approximate probabilistic inference algorithms are central to diverse disciplines, including statistics, robotics, machine learning, and artificial intelligence.
Popular approaches to approximate inference include sequential Monte Carlo, variational inference, and Markov chain Monte Carlo.
A key problem faced by practitioners is measuring the accuracy of an approximate inference algorithm on a specific data set.
The accuracy is influenced by complex interactions between the specific data set in question, the model family, the algorithm tuning parameters such as the number of iterations, and any associated proposal distributions and/or approximating variational family.
Unfortunately, practitioners assessing the accuracy of inference have to rely on heuristics that are either brittle or specialized for one type of algorithm \citep{cowles1996markov}, or both.
For example, log marginal likelihood estimates can be used to assess the accuracy of sequential Monte Carlo and variational inference, but these estimates can fail to significantly penalize an algorithm for missing a posterior mode.
Expectations of probe functions do not assess the full approximating distribution, and they require design specific to each model.

This paper introduces an algorithm for estimating the symmetrized KL divergence between the output distributions of a broad class of exact and approximate inference algorithms. 
The key idea is that inference algorithms can be treated as probabilistic models and the random variables used within the inference algorithm can be viewed as latent variables.
We show how sequential Monte Carlo, Markov chain Monte Carlo, rejection sampling, and variational inference can be represented in a common mathematical formalism based on two new concepts: \emph{generative inference models} and \emph{meta-inference algorithms}.
Using this framework, we introduce the Auxiliary Inference Divergence Estimator (AIDE), which estimates the symmetrized KL divergence between the output distributions of two inference algorithms that have both been endowed with a meta-inference algorithm.
We also show that the conditional SMC update of \citet{andrieu2010particle} and the reverse AIS Markov chain of \citet{grosse2015sandwiching} are both special cases of a `generalized conditional SMC update', which we use as a canonical meta-inference algorithm for SMC.
AIDE is a practical tool for measuring the accuracy of SMC and variational inference algorithms relative to gold-standard inference algorithms.
Note that this paper does not provide a practical solution to the MCMC convergence diagnosis problem.
Although in principle AIDE can be applied to MCMC, to do so in practice will require more accurate meta-inference algorithms for MCMC to be developed.

\begin{figure}[t]
\setlength{\abovecaptionskip}{10pt}
\centering
\resizebox{\textwidth}{!}{
\begin{tikzpicture}
\tikzstyle{line} = [draw, thick]

\node (aide) at ( 0, 0) [rectangle, draw, text width=4cm, align=center] 
    {\bf AIDE\\ Auxiliary Inference Divergence Estimator};

\node (gold) at ( -2.5, 2.5) [align=center] 
    {Gold standard\\inference algorithm};

\node (target) at ( 2.5, 2.5) [align=center] 
    {Target inference algorithm\\(the algorithm being measured)};

\node (estimate) at (0, -2)  [align=center]
    {Symmetrized KL divergence estimate $\hat{D}$\\ 
     $\hat{D} \approx D_{\mbox{\scriptsize KL}}(\mbox{gold-standard} || \mbox{target})
                    + D_{\mbox{\scriptsize KL}}(\mbox{target} || \mbox{gold-standard})$};

\draw [->, thick] (node cs:name=gold, anchor=south) 
    .. controls +(0,-1) and +(0,1) 
    .. (node cs: name=aide,angle=110);

\draw [->, thick] (node cs:name=target, anchor=south) 
    .. controls +(0,-1) and +(0,1) 
    .. (node cs: name=aide,angle=70);

\draw [->, thick] (aide.south) -- (estimate.north);

\node (Ngold) at (-3, 0.7) [align=right] 
    {$N_{\GOLD}$};
\node (Ngoldtext) [align=right, left=0mm of Ngold]
    {Number of gold-standard\\inference runs};

\draw [->, thick] (Ngold.east) 
    .. controls +(0.5, 0) and +(-0.5, 0) 
    .. (node cs: name=aide, angle=175);

\node (Ntarget) at (3, 0.7) [align=left] 
    {$N_{\TARGET}$};
\node (Ntargettext) [align=left, right=0mm of Ntarget] 
    {Number of target\\inference runs};

\draw [->, thick] (Ntarget.west) 
    .. controls +(-0.5, 0) and +(0.5, 0) 
    .. (node cs:name=aide, angle=5);

\node (Mgold) at (-3, -0.7) [align=right] 
    {$M_{\GOLD}$};
\node (Mgoldtext) [align=right, left=0mm of Mgold] 
    {Number of meta-inference\\runs for gold-standard};

\draw [->, thick] (Mgold.east) 
    .. controls +(0.5, 0) and +(-0.5, 0) 
    .. (node cs: name=aide, angle=185);

\node (Mtarget) at (3, -0.7) [align=left] 
    {$M_{\TARGET}$};
\node (Mtargettext) at (5, -0.7) [align=left, right=0mm of Mtarget] 
    {Number of meta-inference\\runs for target};

\draw [->, thick] (Mtarget.west) 
    .. controls +(-0.5, 0) and +(0.5, 0) 
    .. (node cs: name=aide, angle=-5);

\end{tikzpicture}
}
\caption{
Using AIDE to estimate the accuracy of a target inference algorithm relative to a gold-standard inference algorithm. 
AIDE is a Monte Carlo estimator of the symmetrized Kullback-Leibler (KL) divergence between the output distributions of two inference algorithms. 
AIDE uses \emph{meta-inference}: inference over the internal random choices made by an inference algorithm.
}
\vspace{4mm}
\setlength{\abovecaptionskip}{5pt}
    \includegraphics[width=1.0\textwidth]{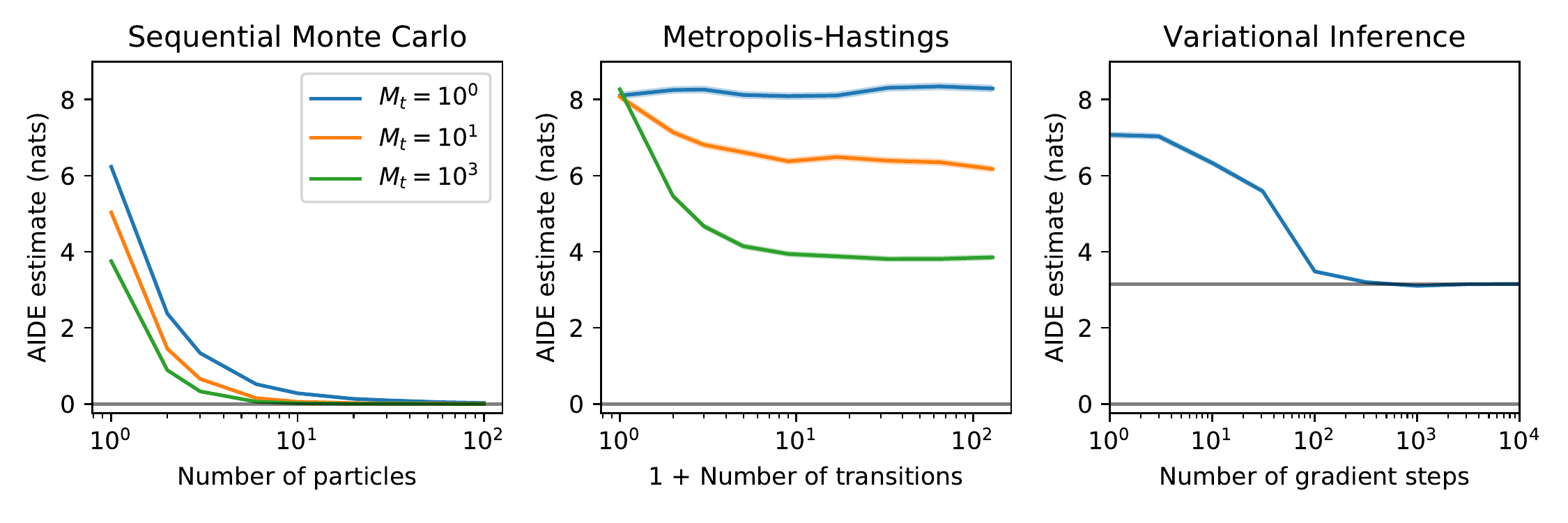}
    \caption{
{\bf AIDE applies to SMC, variational, and MCMC algorithms.}
Left: AIDE estimates for SMC converge to zero, as expected.
Right: AIDE estimates for variational inference converge to a nonzero asymptote that depends on the variational family.
Middle: The symmetrized divergence between MH and the posterior converges to zero, but AIDE over-estimates the divergence in expectation.
Although increasing the number of meta-inference runs $M_{\TARGET}$ reduces the bias of AIDE, AIDE is not yet practical for measuring MH accuracy due to inaccurate meta-inference for MH.
}
\label{fig:linreg}
\end{figure}

\section{Background}
\label{sec:background}
\vspace{-2mm}

Consider a generative probabilistic model with latent variables $X$ and observed variables $Y$. 
We denote assignments to these variables by $x \in \mathcal{X}$ and $y \in \mathcal{Y}$. 
Let $p(x,y)$ denote the joint density of the generative model. 
The posterior density is $p(x|y) := p(x,y) / p(y)$ where $p(y) = \int p(x, y) dx$ is the marginal likelihood, or `evidence'.

Sampling-based approximate inference strategies including Markov chain Monte Carlo (MCMC, \citep{metropolis1953equation, hastings1970monte}), sequential Monte Carlo (SMC, \citep{del2006sequential}), annealed importance sampling (AIS, \citep{neal2001annealed}) and importance sampling with resampling (SIR, \citep{rubin1988using,smith1992bayesian}), generate samples of the latent variables that are approximately distributed according to $p(x|y)$.
Use of a sampling-based inference algorithm is often motivated by theoretical guarantees of exact convergence to the posterior in the limit of infinite computation (e.g. number of transitions in a Markov chain, number of importance samples in SIR). 
However, how well the sampling distribution approximates the posterior distribution for finite computation is typically difficult to analyze theoretically or estimate empirically with confidence.

Variational inference \citep{jordan1999introduction} explicitly minimizes the approximation error of the approximating distribution $q_{\theta}(x)$ over parameters $\theta$ of a variational family. 
The error is usually quantified using the Kullback-Leibler (KL) divergence from the approximation $q_{\theta}(x)$ to the posterior $p(x|y)$, denoted $\kl{q_{\theta}(x)}{p(x|y)}$. 
Unlike sampling-based approaches, variational inference does not generally give exact results for infinite computation because the variational family does not include the posterior. 
Minimizing the KL divergence is performed by maximizing the `evidence lower bound' (ELBO) $\mathcal{L} = \log p(y) - \kl{q_{\theta}(x)}{p(x|y)}$ over $\theta$.  
Since $\log p(y)$ is usually unknown, the actual error (the KL divergence) of a variational approximation is also unknown.

\section{Estimating the symmetrized KL divergence between inference algorithms}
This section defines our mathematical formalism for analyzing inference algorithms;
shows how to represent SMC, MCMC, rejection sampling, and variational inference in this formalism;
and introduces the Auxiliary Inference Divergence Estimator (AIDE), an algorithm for estimating the symmetrized KL divergence between two inference algorithms.

\subsection{Generative inference models and meta-inference algorithms} \label{sec:formalism}
We define an inference algorithm as a procedure that produces a single approximate posterior sample.
Repeated runs of the algorithm give independent samples.
For each inference algorithm, there is an `output density' $q(x)$ that represents the probability that the algorithm returns a given sample $x$ on any given run of the algorithm.
Note that $q(x)$ depends on the observations $y$ that define the inference problem, but we suppress that in the notation.
The inference algorithm is accurate when $q(x) \approx p(x|y)$ for all $x$.
We denote a sample produced by running the algorithm by $x \sim q(x)$.

A naive simple Monte Carlo estimator of the KL divergence between the output distributions of two inference algorithms requires the output densities of both algorithms.
However, it is typically intractable to compute the output densities of sampling-based inference algorithms like MCMC and SMC, because that would require marginalizing over all possible values that the random variables drawn during the algorithm could possibly take.
A similar difficulty arises when computing the marginal likelihood $p(y)$ of a generative probabilistic model $p(x, y)$.
This suggests that we treat the inference algorithm as a probabilistic model, estimate its output density using ideas from marginal likelihood estimation, and use these estimates in a Monte Carlo estimator of the divergence.
We begin by making the analogy between an inference algorithm and a probabilistic model explicit:

\theoremstyle{definition}
\begin{definition}[Generative inference model] \label{def:geninfmodel}
A \emph{generative inference model} is a tuple $(\mathcal{U}, \mathcal{X}, q)$ where $q(u, x)$ is a joint density defined on $\mathcal{U} \times \mathcal{X}$.
A generative inference model \emph{models} an inference algorithm if the output density of the inference algorithm is the marginal likelihood $q(x) = \int q(u, x) du$ of the model for all $x$.
An element $u \in \mathcal{U}$ represents a complete assignment to the internal random variables within the inference algorithm, and is called a `trace'.
The ability to simulate from $q(u, x)$ is required, but the ability to compute the density $q(u, x)$ is not.
A simulation, denoted $u, x \sim q(u, x)$, may be obtained by running the inference algorithm and recording the resulting trace $u$ and output $x$.\footnote{The trace data structure could in principle be obtained by writing the inference algorithm in a probabilistic programming language like Church \citep{goodman2008church}, but the computational overhead would be high.}
\end{definition}

A generative inference model can be understood as a generative probabilistic model where the $u$ are the latent variables and the $x$ are the observations.
Note that two different generative inference models may use different representations for the internal random variables of the same inference algorithm.
In practice, constructing a generative inference model from an inference algorithm amounts to defining the set of internal random variables.
For marginal likelihood estimation in a generative inference model, we use a `meta-inference' algorithm:

\theoremstyle{definition}
\begin{definition}[Meta-inference algorithm] \label{def:metainference}
For a given generative inference model $(\mathcal{U}, \mathcal{X}, q)$, a \emph{meta-inference algorithm} is a tuple $(r, \xi)$ where $r(u;x)$ is a density on traces $u \in \mathcal{U}$ of the inference algorithm, indexed by outputs $x \in \mathcal{X}$ of the inference algorithm, and where $\xi(u, x)$ is the following function of $u$ and $x$ for some $Z > 0$:
{\setlength{\mathindent}{5.5cm}
\begin{equation}
\xi(u, x) := Z \frac{q(u, x)}{r(u; x)}
\end{equation}}%
We require the ability to sample $u \sim r(u;x)$ given a value for $x$, and the ability to evaluate $\xi(u, x)$ given $u$ and $x$.
We call a procedure for sampling from $r(u;x)$ a `meta-inference sampler'.
We do not require the ability to evaluate the density $r(u;x)$.
\end{definition}
A meta-inference algorithm is considered accurate for a given $x$ if $r(u;x) \approx q(u|x)$ for all $u$.
Conceptually, a meta-inference sampler tries to answer the question `how could my inference algorithm have produced this output $x$?'
Note that if it is tractable to evaluate the marginal likelihood $q(x)$ of the generative inference model up to a normalizing constant, then it is not necessary to represent internal random variables for the inference algorithm, and a generative inference model can define the trace as an empty token $u = ()$ with $\mathcal{U} = \{()\}$.
In this case, the meta-inference algorithm has $r(u;x) = 1$ for all $x$ and $\xi(u, x) = Z q(x)$.

\subsection{Examples} \label{sec:examples}
We now show how to construct generative inference models and corresponding meta-inference algorithms for SMC, AIS, MCMC, SIR, rejection sampling, and variational inference.
The meta-inference algorithms for AIS, MCMC, and SIR are derived as special cases of a generic SMC meta-inference algorithm.

\paragraph{Sequential Monte Carlo.}
We consider a general class of SMC samplers introduced by \citet{del2006sequential}, which can be used for approximate inference in both sequential state space and non-sequential models.
We briefly summarize a slightly restricted variant of the algorithm here, and refer the reader to the supplement and \citet{del2006sequential} for full details.
The SMC algorithm propagates $P$ weighted particles through $T$ steps, using proposal kernels $k_t$ and multinomial resampling based on weight functions $w_1(x_1)$ and $w_t(x_{t-1}, x_t)$ for $t > 1$ that are defined in terms of `backwards kernels' $\ell_t$ for $t=2\ldots T$.
Let $x_t^i$, $w_t^i$ and $W_t^i$ denote the value, unnormalized weight, and normalized weight of particle $i$ at time $t$, respectively.
We define the output sample $x$ of SMC as a single draw from the particle approximation at the final time step, which is obtained by sampling a particle index $I_T \sim \mbox{Categorical}(W_T^{1:P})$ where $W_T^{1:P}$ denotes the vector of weights $(W_T^1,\ldots,W_T^P)$, and then setting $x \gets x^{I_T}_T$.
The generative inference model uses traces of the form $u = (\mathbf{x}, \mathbf{a}, I_T)$, where $\mathbf{x}$ contains the values of all particles at all time steps and where $\mathbf{a}$ (for `ancestor') contains the index $a_t^i \in \{1\ldots P\}$ of the parent of particle $x_{t+1}^i$ for each particle $i$ and each time step $t = 1\ldots T-1$.
Algorithm~\ref{alg:smc-metainference} defines a canonical meta-inference sampler for this generative inference model that takes as input a latent sample $x$ and generates an SMC trace $u \sim r(u;x)$ as output.
The meta-inference sampler first generates an ancestral trajectory of particles $(x_1^{I_1}, x_2^{I_2}, \ldots, x_T^{I_T})$ that terminates in the output sample $x$, by sampling sequentially from the backward kernels $\ell_t$, starting from $x_T^{I_T} = x$.
Next, it runs a conditional SMC update \citep{andrieu2010particle} conditioned on the ancestral trajectory.
For this choice of $r(u;x)$ and for $Z = 1$, the function $\xi(u, x)$ is closely related to the marginal likelihood estimate $\widehat{p(y)}$ produced by the SMC scheme:\footnote{AIDE also applies to approximate inference algorithms for undirected probabilistic models; the marginal likelihood estimate is replaced with the estimate of the partition function.}
 $\xi(u, x) = p(x, y) / \widehat{p(y)}$.
See supplement for derivation.

\begin{algorithm}[h]
\footnotesize
\caption{Generalized conditional SMC (a canonical meta-inference sampler for SMC)} 
\label{alg:smc-metainference}  
\begin{minipage}{0.5\textwidth}
\begin{algorithmic}
    \Require Latent sample $x$, SMC parameters
    \State $I_T \sim \mbox{Uniform}(1\ldots P)$
    \State $x_T^{I_T} \gets x$
    \For{$t \gets T-1\ldots 1$}
        \State $I_t \sim \mbox{Uniform}(1\ldots P)$
        \State \LeftComment{Sample from backward kernel}
        \State $x_t^{I_t} \sim \ell_{t+1}(\cdot; x_{t+1}^{I_{t+1}})$ 
    \EndFor
    \For{$i \gets 1\ldots P$}
        \IIf{$i \ne I_1$} $x^i_1 \sim k_1(\cdot)$
        \State $w^i_1 \gets w_1(x^i_1)$
    \EndFor
    \For{$t \gets 2\ldots T$}
        \State $W_{t-1}^{1:P} \gets w_{t-1}^{1:P} / (\sum_{i=1}^Pw_{t-1}^i )$
        \For{$i \gets 1\ldots P$}
            \If{$i = I_t$} $a_{t-1}^i \gets I_{t-1}$
            \Else
                \State $a_{t-1}^i \sim \mbox{Categorical}(W_{t-1}^{1:P})$
                \State $x^i_t \sim k_t(\cdot;x_{t-1}^{a_{t-1}^i})$
            \EndIf
            \State $w_t^i \gets w_t(x_{t-1}^{a_{t-1}^i}, x_t^i)$
        \EndFor
    \EndFor
    \State $u \gets (\mathbf{x}, \mathbf{a}, I_T)\quad$ \LeftComment{Return an SMC trace}
    \State \Return $u$
\end{algorithmic}
\end{minipage}%
\begin{minipage}{0.5\textwidth}
\begin{flushright}
\resizebox{0.8\textwidth}{!}{
\begin{tikzpicture}
\tikzstyle{line} = [draw, thick]
\tikzstyle{particle} = [circle, draw, thick, align=center]

\node[particle, fill={rgb:red,1;white,4}] (x11) at (1, 1) {$x_1^1$};
\node[particle] (x12) at (2, 1) {$x_1^2$};
\node[particle] (x13) at (3, 1) {$x_1^3$};
\node[particle] (x14) at (4, 1) {$x_1^4$};

\node[particle] (x21) at (1, 2.5) {$x_2^1$};
\node[particle] (x22) at (2, 2.5) {$x_2^2$};
\node[particle, fill={rgb:red,1;white,4}] (x23) at (3, 2.5) {$x_2^3$};
\node[particle] (x24) at (4, 2.5) {$x_2^4$};

\node[particle] (x31) at (1, 4) {$x_3^1$};
\node[particle, fill={rgb:red,1;white,4}] (x32) at (2, 4) {$x_3^2$};
\node[particle] (x33) at (3, 4) {$x_3^3$};
\node[particle] (x34) at (4, 4) {$x_3^4$};

\node[circle, draw, thick, align=center, fill={rgb:black,1;white,4}, minimum size=22] (output) at (2.5, 5.5) {$x$};
\node[right=1pt of output, align=left] {\small Latent sample\\\small(input to meta-inference sampler)};

\draw [->, line width=1.75] (node cs:name=x23,angle=225) -- node[above,pos=0.7] {$\ell_2$} (node cs:name=x11,angle=45);
\draw [->, line width=1.75] (node cs:name=x32,angle=315) -- node[below,pos=0.1,shift={(-2pt,0pt)}] {$\ell_3$} (node cs:name=x23,angle=135);
\draw [->, line width=1.75] (node cs:name=output,angle=250) -- node[below,pos=0.1,shift={(3pt,0pt)}] {$\delta$} (node cs:name=x32,angle=70);

\draw [->, thick] (node cs:name=x11,angle=90) -- (node cs:name=x21,angle=270);
\draw [->, thick] (node cs:name=x13,angle=135) -- (node cs:name=x22,angle=315);
\draw [->, thick] (node cs:name=x14,angle=90) -- (node cs:name=x24,angle=270);

\draw [->, thick] (node cs:name=x23,angle=45) -- (node cs:name=x34,angle=225);
\draw [->, thick] (node cs:name=x23,angle=90) -- (node cs:name=x33,angle=270);
\draw [->, thick] (node cs:name=x22,angle=135) -- (node cs:name=x31,angle=315);

\node at (5.3, 1) {$I_1 = 1$};
\node at (5.3, 2.5) {$I_2 = 3$};
\node at (5.3, 4) {$I_3 = 2$};

\node (label) at (2.5, 0) {$T = 3$};

\node[particle, fill={rgb:red,1;white,4}, below=0.5cm of label] (example) {$x_t^i$};
\node[right=0.1cm of example, align=left] {Member of ancestral\\trajectory};

\end{tikzpicture}
}
\end{flushright}
\end{minipage}
\end{algorithm}

\paragraph{Annealed importance sampling.} \label{sec:ais}
When a single particle is used ($P = 1$), and when each forward kernel $k_t$ satisfies detailed balance for some intermediate density, the SMC algorithm simplifies to annealed importance sampling (AIS, \citep{neal2001annealed}), and the canonical SMC meta-inference inference (Algorithm~\ref{alg:smc-metainference}) consists of running the forward kernels in reverse order, as in the reverse annealing algorithm of \citet{grosse2015sandwiching, grosse2016measuring}.
The canonical meta-inference algorithm is accurate ($r(u;x) \approx q(u;x)$) if the AIS Markov chain is kept close to equilibrium at all times.
This is achieved if the intermediate densities form a sufficiently fine-grained sequence.
See supplement for analysis.

\paragraph{Markov chain Monte Carlo.} \label{sec:mcmc}
We define each run of an MCMC algorithm as producing a single output sample $x$ that is the iterate of the Markov chain produced after a predetermined number of burn-in steps has passed.
We also assume that each MCMC transition operator satisfies detailed balance with respect to the posterior $p(x|y)$.
Then, this is formally a special case of AIS.
However, unless the Markov chain was initialized near the posterior $p(x|y)$, the chain will be far from equilibrium during the burn-in period, and the AIS meta-inference algorithm will be inaccurate.

\paragraph{Importance sampling with resampling.}
Importance sampling with resampling, or SIR \citep{rubin1988using} can be seen as a special case of SMC if we set the number of steps to one ($T = 1$).
The trace of the SIR algorithm is then the set of particles $x_1^i$ for $i \in \{1,\ldots,P\}$ and output particle index $I_1$.
Given output sample $x$, the canonical SMC meta-inference sampler then simply samples $I_1 \sim \mbox{Uniform}(1\ldots P)$, sets $x_1^{I_1} \gets x$, and samples the other $P - 1$ particles from the importance distribution $k_1(x)$.

\paragraph{Rejection sampling.} \label{sec:rejection}
To model a rejection sampler for a posterior distribution $p(x|y)$, we assume it is tractable to evaluate the unnormalized posterior density $p(x, y)$.
We define $\mathcal{U} = \{()\}$ as described in Section~\ref{sec:formalism}.
For meta-inference, we define $Z = p(y)$ so that $\xi(u, x) = p(y) p(x|y) = p(x, y)$.
It is not necessary to represent the internal random variables of the rejection sampler.

\paragraph{Variational inference.}
We suppose a variational approximation $q_{\theta}(x)$ has been computed through optimization over the variational parameters $\theta$.
We assume that it is possible to sample from the variational approximation, and evaluate its normalized density.
Then, we use $\mathcal{U} = \{()\}$ and $Z = 1$ and $\xi(u, x) = q_{\theta}(x)$.
Note that this formulation also applies to amortized variational inference algorithms, which reuse the parameters $\theta$ for inference across different observation contexts $y$.

\subsection{The auxiliary inference divergence estimator} \label{sec:estimator}
Consider a probabilistic model $p(x, y)$, a set of observations $y$, and two inference algorithms that approximate $p(x|y)$.
One of the two inference algorithms is considered the `gold-standard', and has a generative inference model $(\mathcal{U},\mathcal{X},q_{\GOLD})$ and a meta-inference algorithm $(r_{\GOLD}, \xi_{\GOLD})$.
The second algorithm is considered the `target' algorithm, with a generative inference model $(\mathcal{V},\mathcal{X},q_{\TARGET})$ (we denote a trace of the target algorithm by $v \in \mathcal{V}$), and a meta-inference algorithm $(r_{\TARGET}, \xi_{\TARGET})$.
This section shows how to estimate an upper bound on the symmetrized KL divergence between $q_{\GOLD}(x)$ and $q_{\TARGET}(x)$, which is:
\begin{equation} \label{eq:symkl}
    \kl{q_{\GOLD}(x)}{q_{\TARGET}(x)} + \kl{q_{\TARGET}(x)}{q_{\GOLD}(x)} = \E_{x \sim q_{\GOLD}(x)} \left[ \log \frac{q_{\GOLD}(x)}{q_{\TARGET}(x)} \right] + \E_{x \sim q_{\TARGET}(x)} \left[ \log \frac{q_{\TARGET}(x)}{q_{\GOLD}(x)} \right]
\end{equation}
We take a Monte Carlo approach.
Simple Monte Carlo applied to the Equation~(\ref{eq:symkl}) requires that $q_{\GOLD}(x)$ and $q_{\TARGET}(x)$ can be evaluated, which would prevent the estimator from being used when either inference algorithm is sampling-based.
Algorithm~\ref{alg:estimator} gives the Auxiliary Inference Divergence Estimator (AIDE), an estimator of the symmetrized KL divergence that only requires evaluation of $\xi_{\GOLD}(u, x)$ and $\xi_{\TARGET}(v, x)$ and not $q_{\GOLD}(x)$ or $q_{\TARGET}(x)$, permitting its use with sampling-based inference algorithms.

\begin{algorithm}
\footnotesize
\caption{Auxiliary Inference Divergence Estimator (AIDE)} \label{alg:estimator}
\begin{algorithmic}
    \Require 
        \begin{varwidth}[t]{\linewidth}
        \begin{tabular}[t]{ll}
        Gold-standard inference model and meta-inference algorithm
            & $(\mathcal{U}, \mathcal{X}, q_{\GOLD})$ and $(r_{\GOLD}, \xi_{\GOLD})$\\
        Target inference model and meta-inference algorithm
            & $(\mathcal{V}, \mathcal{X}, q_{\TARGET})$ and $(r_{\TARGET}, \xi_{\TARGET})$\\
        Number of runs of gold-standard algorithm & $N_{\GOLD}$\\
        Number of runs of meta-inference sampler for gold-standard & $M_{\GOLD}$\\
        Number of runs of target algorithm & $N_{\TARGET}$\\
        Number of runs of meta-inference sampler for target & $M_{\TARGET}$
        \end{tabular}
        \end{varwidth}
    \For{$n \gets 1\ldots N_{\GOLD}$}
        \State $u_{n,1}, x_n \sim q_{\GOLD}(u, x)$ \,\;\;\;\LeftComment{Run gold-standard algorithm, record trace $u_{n,1}$ and output $x_n$}
        \For{$m \gets 2\ldots M_{\GOLD}$}
            \State $u_{n,m} \sim r_{\GOLD}(u; x_n)$ \;\LeftComment{Run meta-inference sampler for gold-standard algorithm, on input $x_n$}
        \EndFor
        \For{$m \gets 1\ldots M_{\TARGET}$}
            \State $v_{n,m} \sim r_{\TARGET}(v; x_n)$ \;\;\LeftComment{Run meta-inference sampler for target algorithm, on input $x_n$}
        \EndFor
    \EndFor
    \For{$n \gets 1\ldots N_{\TARGET}$}
        \State $v_{n,1}', x_n' \sim q_{\TARGET}(v, x)$ \,\;\;\;\;\LeftComment{Run target algorithm, record trace $v_{n,1}'$ and output $x_n'$}
        \For{$m \gets 2\ldots M_{\TARGET}$}
            \State $v_{n,m}' \sim r_{\TARGET}(v; x_n')$ \;\;\LeftComment{Run meta-inference sampler for target algorithm, on input $x_n'$}
        \EndFor
        \For{$m \gets 1\ldots M_{\GOLD}$}
            \State $u_{n,m}' \sim r_{\GOLD}(u; x_n')$ \;\LeftComment{Run meta-inference sampler for gold-standard algorithm, on input $x_n'$}
        \EndFor
    \EndFor
    \State $
\hat{D} \gets
            \displaystyle \frac{1}{N_{\GOLD}} \sum_{n=1}^{N_{\GOLD}}
                \log \left(\frac
                    {\frac{1}{M_{\GOLD}} \sum_{m=1}^{M_{\GOLD}} \xi_{\GOLD}(u_{n,m}, x_n)}
                    {\frac{1}{M_{\TARGET}} \sum_{m=1}^{M_{\TARGET}} \xi_{\TARGET}(v_{n,m}, x_n)}\right)
            + \frac{1}{N_{\TARGET}} \sum_{n=1}^{N_{\TARGET}}
                \log \left(\frac
                    {\frac{1}{M_{\TARGET}} \sum_{m=1}^{M_{\TARGET}} \xi_{\TARGET}(v_{n,m}', x_n')}
                    {\frac{1}{M_{\GOLD}} \sum_{m=1}^{M_{\GOLD}} \xi_{\GOLD}(u_{n,m}', x_n')}\right)$
    \State \Return $\hat{D}$ \Comment{$\hat{D}$ is an estimate of $\KL(q_{\GOLD}(x)||q_{\TARGET}(x)) + \KL(q_{\TARGET}(x)||q_{\GOLD}(x))$}
\end{algorithmic}
\end{algorithm}

The generic AIDE algorithm above is defined in terms of abstract generative inference models and meta-inference algorithms.
For concreteness, the supplement contains the AIDE algorithm specialized to the case when the gold-standard is AIS and the target is a variational approximation.

\begin{restatable}{theorem}{elboconsistency}
\label{thm:elbo-estimator}
    The estimate $\hat{D}$ produced by AIDE is an upper bound on the symmetrized KL divergence in expectation, and the expectation is nonincreasing in AIDE parameters $M_{\GOLD}$ and $M_{\TARGET}$. \end{restatable}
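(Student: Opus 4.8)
The plan is to compute $\E[\hat D]$ in closed form up to two one-sided inequalities and then read off both claims. Writing $\hat D = T_{\GOLD} + T_{\TARGET}$ for the two outer sums, each summand is i.i.d., so the $N$'s do not affect the mean and I can analyze a single generic term of each loop. For the gold loop the generic term is $\log \hat q_{\GOLD}(x) - \log \hat q_{\TARGET}(x)$ with $x \sim q_{\GOLD}$, where $\hat q_{\GOLD}(x) = \frac{1}{M_{\GOLD}}\sum_m \xi_{\GOLD}(u_m,x)$ has its first trace drawn from $q_{\GOLD}(u\mid x)$ (because $(u_{1},x)$ comes from the joint) and its remaining traces from $r_{\GOLD}(\cdot;x)$, while $\hat q_{\TARGET}(x) = \frac{1}{M_{\TARGET}}\sum_m \xi_{\TARGET}(v_m,x)$ has all traces from $r_{\TARGET}(\cdot;x)$. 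The target loop is the mirror image. So everything reduces to two estimator types, analyzed through fixed-$x$ expectations that I then integrate against the outer sampling density.

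Fix $x$ and drop subscripts; set $c := Z q(x)$ and $f(u) := \xi(u,x)/c = q(u\mid x)/r(u;x)$, so that $\E_{u\sim r}[f(u)] = 1$. The denominator estimator is the all-from-$r$ average $R_M := \frac1M\sum_{m=1}^M f(u_m)$ with $u_m \sim r$ i.i.d., satisfying $\E[R_M]=1$. The easy half of the argument treats these: by concavity of $\log$ and Jensen, $\E[\log(c\,R_M)] = \log c + \E[\log R_M] \le \log c$, giving the needed one-sided bound on the denominators; and the standard subset-averaging identity $R_{M+1} = \frac{1}{M+1}\sum_{j} R_M^{(-j)}$ (each leave-one-out term equidistributed with $R_M$) combined with concavity of $\log$ shows $\E[\log R_M]$ is nondecreasing in $M$. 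Since the denominators enter $\hat D$ negated, this makes their contribution a valid lower bound that is \emph{nonincreasing} in $M$.

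The main obstacle is the numerator estimator, whose first trace is drawn from the true conditional $q(u\mid x)$ rather than from $r$; this is exactly the device that flips the Jensen inequality the right way. The key step I would carry out is to tilt the special coordinate: since $q(u\mid x) = f(u)\,r(u;x)$, sampling $u_1\sim q(\cdot\mid x)$ equals reweighting an all-from-$r$ sample by $f(u_1)$, so letting all $u_m\sim r$ and exploiting exchangeability of the symmetric factor $\log R_M$,
\[
\E[\log \hat q_{\GOLD}(x)] - \log c \;=\; \E_{u_1\sim q(\cdot\mid x)}\!\left[\log R_M\right] \;=\; \E_{\mathrm{all}\,r}\!\left[f(u_1)\log R_M\right] \;=\; \E_{\mathrm{all}\,r}\!\left[R_M\log R_M\right].
\]
Now the convex function $\phi(t) := t\log t$ does double duty. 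Jensen at $\E[R_M]=1$ gives $\E[\phi(R_M)] \ge \phi(1) = 0$, which is the sought lower bound $\E[\log\hat q_{\GOLD}(x)] \ge \log c$; and the \emph{same} subset-averaging identity together with convexity of $\phi$ yields $\E[\phi(R_{M+1})] \le \E[\phi(R_M)]$, i.e. the numerator's log-expectation is nonincreasing in $M$. I expect rewriting the one-from-posterior expectation as $\E[R_M\log R_M]$ to be the crux; once that identity is in hand, both the bound and the monotonicity fall out of elementary convexity, and the degenerate case $\mathcal U=\{()\}$ (where $f\equiv 1$, $R_M\equiv 1$) is handled automatically as equality.

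Finally I would assemble the pieces. Integrating the fixed-$x$ bounds over $x\sim q_{\GOLD}$ gives $\E[T_{\GOLD}] \ge \log(Z_{\GOLD}/Z_{\TARGET}) + \kl{q_{\GOLD}(x)}{q_{\TARGET}(x)}$, and symmetrically $\E[T_{\TARGET}] \ge \log(Z_{\TARGET}/Z_{\GOLD}) + \kl{q_{\TARGET}(x)}{q_{\GOLD}(x)}$; the $\log Z$ terms cancel, proving $\E[\hat D]$ is an upper bound on the symmetrized divergence. For monotonicity, the four affected terms each move the right way: increasing $M_{\GOLD}$ lowers the gold numerator in $T_{\GOLD}$ (numerator nonincreasing) and raises the gold denominator in $T_{\TARGET}$, which is negated, and likewise for $M_{\TARGET}$; hence $\E[\hat D]$ is nonincreasing in both parameters, as claimed.
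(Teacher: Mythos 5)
Your proof is correct. Its skeleton coincides with the paper's: both reduce to a per-term analysis (the $N$'s affect only variance, not the mean), both exploit the fact that exactly one trace in each numerator estimator is drawn from the conditional $q(u|x)$ rather than from $r(u;x)$, and both then use permutation symmetry of the weight average together with a convexity argument. The difference is in packaging. The paper works with joint densities on trace tuples --- the mixture $\eta^{M}(u_{1:M};x) = \frac{1}{M}\sum_k q(u_k|x)\prod_{\ell\ne k} r(u_\ell;x)$ versus the product $\lambda^{M}(u_{1:M};x) = \prod_k r(u_k;x)$ --- and writes both bias terms as KL divergences, so the upper bound follows from nonnegativity of KL and the monotonicity from convexity of KL combined with the decomposition of $\eta^{M}$ as a mixture of terms $\eta^{M-1}\cdot r$. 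You instead collapse everything onto the scalar $R_M$ and use pointwise convexity; your two quantities are exactly the paper's, via the identities $\kl{\eta^{M}}{\lambda^{M}} = \E_{\lambda^{M}}\left[R_M \log R_M\right]$ and $\kl{\lambda^{M}}{\eta^{M}} = -\E_{\lambda^{M}}\left[\log R_M\right]$ (note that $\eta^{M}/\lambda^{M} = R_M$), and your leave-one-out identity $R_{M+1} = \frac{1}{M+1}\sum_j R_M^{(-j)}$ plays precisely the role of the paper's mixture decomposition. What your version buys is economy: a single subset-averaging identity plus Jensen applied to $\phi(t) = t\log t$ and to $\log t$ delivers both the bound and the monotonicity, with the degenerate case $\mathcal{U}=\{()\}$ handled automatically. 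What the paper's KL packaging buys is interpretation: it directly produces the bias formula of Equation~(\ref{eq:general_gap}), identifying AIDE's bias at $M_{\GOLD}=M_{\TARGET}=1$ with the meta-inference approximation errors $\kl{q(u|x)}{r(u;x)}$ and $\kl{r(u;x)}{q(u|x)}$, a reading that is less visible in the scalar form.
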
 
See supplement for proof.
Briefly, AIDE estimates an upper bound on the symmetrized divergence in expectation because it uses unbiased estimates of $q_{\TARGET}(x_n)$ and $q_{\GOLD}(x_n)^{-1}$ for $x_n \sim q_{\GOLD}(x)$, and unbiased estimates of $q_{\GOLD}(x_n')$ and $q_{\TARGET}(x_n')^{-1}$ for $x_n' \sim q_{\TARGET}(x)$.
For $M_{\GOLD} = 1$ and $M_{\TARGET} = 1$, AIDE over-estimates the true symmetrized divergence by:
{\setlength{\mathindent}{0cm}
\begin{equation} \label{eq:general_gap}
\begin{array}{ll}
&\E[ \hat{D} ] - (\kl{q_{\GOLD}(x)}{q_{\TARGET}(x)} + \kl{q_{\TARGET}(x)}{q_{\GOLD}(x)}) =\\[5pt]
    &\left(\begin{array}{ll}
        &\E_{x \sim q_{\GOLD}(x)} \left[ \kl{q_{\GOLD}(u|x)}{r_{\GOLD}(u;x)} + \kl{r_{\TARGET}(v;x)}{q_{\TARGET}(v|x)} \right]\\
        + &\E_{x \sim q_{\TARGET}(x)} \left[ \kl{q_{\TARGET}(v|x)}{r_{\TARGET}(v;x)} + \kl{r_{\GOLD}(u;x)}{q_{\GOLD}(u|x)} \right]
    \end{array}\right)
\end{array}
\begin{array}{c}
\text{Bias of AIDE}\\
\text{for } {\scriptstyle M_{\GOLD} = M_{\TARGET} = 1}
\end{array}
\end{equation}}%
Note that this expression involves KL divergences between the meta-inference sampling densities ($r_{\GOLD}(u;x)$ and $r_{\TARGET}(v;x)$) and the posteriors in their respective generative inference models
($q_{\GOLD}(u|x)$ and $q_{\TARGET}(v|x)$).
Therefore, the approximation error of meta-inference determines the bias of AIDE.
When both meta-inference algorithms are exact ($r_{\GOLD}(u;x) = q_{\GOLD}(u|x)$ for all $u$ and $x$ and $r_{\TARGET}(v;x) = q_{\TARGET}(v|x)$ for all $v$ and $x$), AIDE is unbiased.
As $M_{\GOLD}$ or $M_{\TARGET}$ are increased, the bias decreases (see Figure~\ref{fig:linreg} and Figure~\ref{fig:hmm} for examples).
If the generative inference model for one of the algorithms does not use a trace (i.e. $\mathcal{U} = \{()\}$ or $\mathcal{V} = \{()\}$), then that algorithm does not contribute a KL divergence term to the bias of Equation~(\ref{eq:general_gap}).
The analysis of AIDE is equivalent to that of \citet{grosse2016measuring} when the target algorithm is AIS and $M_{\TARGET} = M_{\GOLD} = 1$ and the gold-standard inference algorithm is a rejection sampler.

\section{Related Work}
Diagnosing the convergence of approximate inference is a long-standing problem.
Most existing work is either tailored to specific inference algorithms \citep{kong1992note}, designed to detect lack of exact convergence \citep{cowles1996markov}, or both.
Estimators of the non-asymptotic approximation error of general approximate inference algorithms have received less attention.
\citet{gorham2015measuring} propose an approach that applies to arbitrary sampling algorithms but relies on special properties of the posterior density such as log-concavity.
Our approach does not rely on special properties of the posterior distribution.

Our work is most closely related to Bounding Divergences with REverse Annealing (BREAD, \citep{grosse2016measuring}) which also estimates upper bounds on the symmetric KL divergence between the output distribution of a sampling algorithm and the posterior distribution.
AIDE differs from BREAD in two ways:
First, whereas BREAD handles single-particle SMC samplers and annealed importance sampling (AIS), AIDE handles a substantially broader family of inference algorithms including SMC samplers with both resampling and rejuvenation steps, AIS, variational inference, and rejection samplers.
Second, BREAD estimates divergences between the target algorithm's sampling distribution and the posterior distribution, but the exact posterior samples necessary for BREAD's theoretical properties are only readily available when the observations $y$ that define the inference problem are simulated from the generative model.
Instead, AIDE estimates divergences against an exact or approximate gold-standard sampler on real (non-simulated) inference problems.
Unlike BREAD, AIDE can be used to evaluate inference in both generative and undirected models.

AIDE estimates the error of sampling-based inference using a mathematical framework with roots in variational inference.
Several recent works have treated sampling-based inference algorithms as variational approximations.
The Monte Carlo Objective (MCO) formalism of \citet{maddison2017filtering} is closely related to our formalism of generative inference models and meta-inference algorithms---indeed a generative inference model and a meta-inference algorithm with $Z=1$ give an MCO defined by: $\mathcal{L}(y, p) = \mathbb{E}_{u, x \sim q(u, x)} [\log (p(x, y) / \xi(u, x))]$, where $y$ denotes observed data.
In independent and concurrent work to our own, \citet{naesseth2017variational, maddison2017filtering} and \citet{le2017auto} treat SMC as a variational approximation using constructions similar to ours.
In earlier work, \citet{salimans2015markov} recognized that MCMC samplers can be treated as variational approximations.
However, these works are concerned with optimization of variational objective functions instead of estimation of KL divergences, and do not involve generating a trace of a sampler from its output.

\begin{figure}[t]
    \centering
        \begin{subfigure}[b]{0.40\textwidth}
            \centering
            \includegraphics[width=1.0\textwidth]{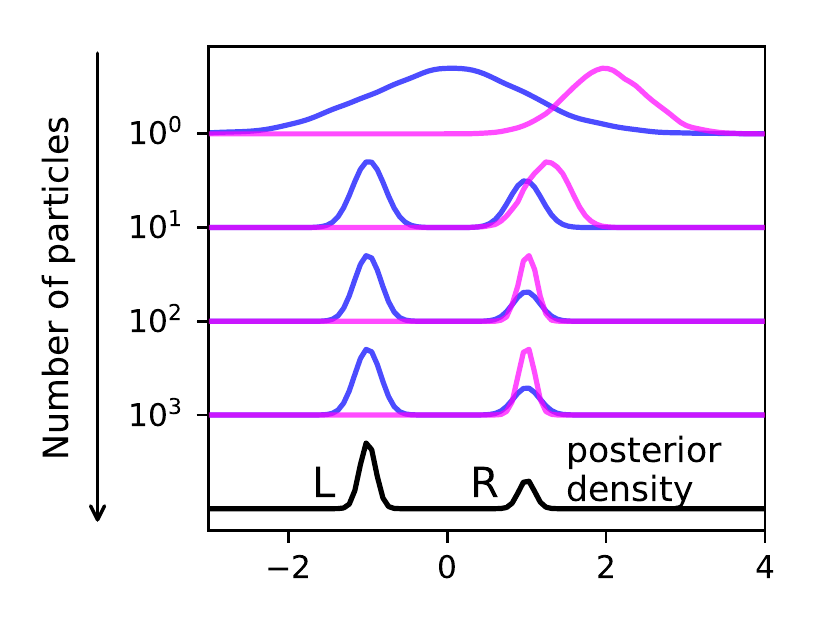}
        \end{subfigure}%
        \begin{subfigure}[b]{0.30\textwidth}
            \centering
            \includegraphics[width=1.0\textwidth]{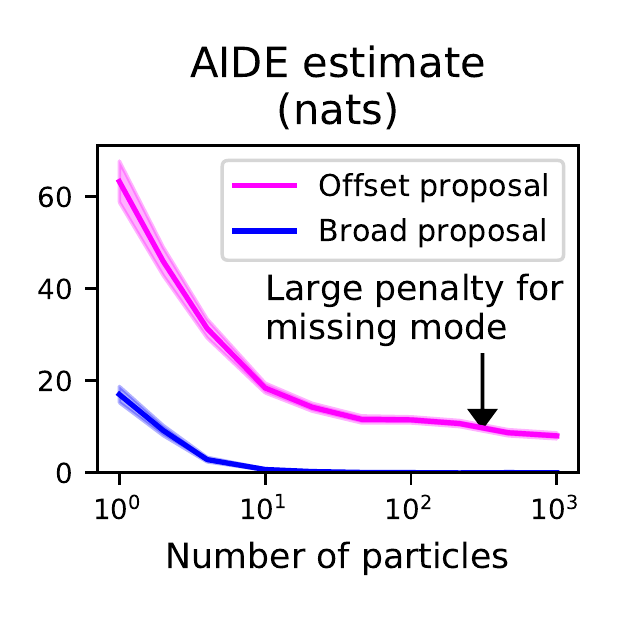}
        \end{subfigure}%
        \begin{subfigure}[b]{0.30\textwidth}
            \centering
            \includegraphics[width=1.0\textwidth]{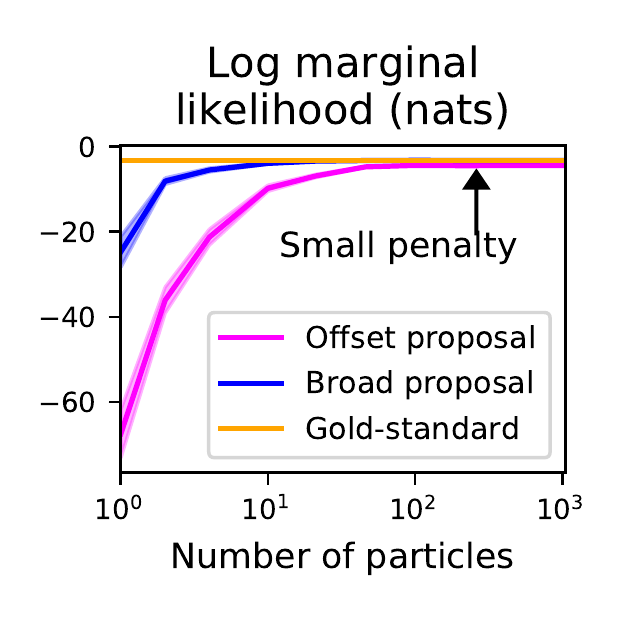}
        \end{subfigure}
        \caption{
{\bf AIDE detects when an inference algorithm misses a posterior mode}.
Left: A bimodal posterior density, with kernel estimates of the output densities of importance sampling with resampling (SIR) using two proposals.
The `broad' proposal (blue) covers both modes, and the `offset' proposal (pink) misses the `L' mode. 
Middle: AIDE detects the missing mode in offset-proposal SIR.
Right: Log marginal likelihood estimates suggest that the offset-proposal SIR is nearly converged.
}
    \label{fig:bimodal}
\end{figure}

\section{Experiments}

\subsection{Comparing the bias of AIDE for different types of inference algorithms}
\vspace{-1mm}
We used a Bayesian linear regression inference problem where exact posterior sampling is tractable to characterize the bias of AIDE when applied to three different types of target inference algorithms: sequential Monte Carlo (SMC), Metropolis-Hastings (MH), and variational inference.
For the gold-standard algorithm we used a posterior sampler with a tractable output density $q_{\GOLD}(x)$, which does not introduce bias into AIDE, so that the AIDE's bias could be completely attributed to the approximation error of meta-inference for each target algorithm.
Figure~\ref{fig:linreg} shows the results.
The bias of AIDE is acceptable for SMC, and AIDE is unbiased for variational inference, but better meta-inference algorithms for MCMC are needed to make AIDE practical for estimating the accuracy of MH.

\subsection{Evaluating approximate inference in a Hidden Markov model}
\vspace{-1mm}
We applied AIDE to measure the approximation error of SMC algorithms for posterior inference in a Hidden Markov model (HMM).
Because exact posterior inference in this HMM is tractable via dynamic programming, we used this opportunity to compare AIDE estimates obtained using the exact posterior as the gold-standard with AIDE estimates obtained using a `best-in-class' SMC algorithm as the gold-standard.
Figure~\ref{fig:hmm} shows the results, which indicate AIDE estimates using an approximate gold-standard algorithm can be nearly identical to AIDE estimates obtained with an exact posterior gold-standard.

\begin{figure}[ht]
\centering
\begin{minipage}[c]{0.20\textwidth} 
\fontsize{8pt}{8pt}
\selectfont
\centering
\includegraphics[width=1.0\textwidth]{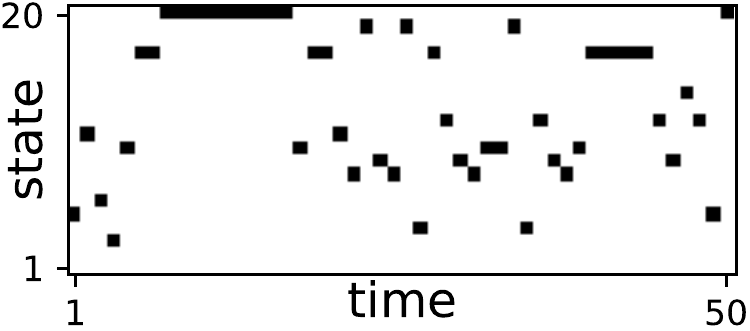}
Ground truth states\\
\vspace{0.1\textwidth}
\includegraphics[width=1.0\textwidth]{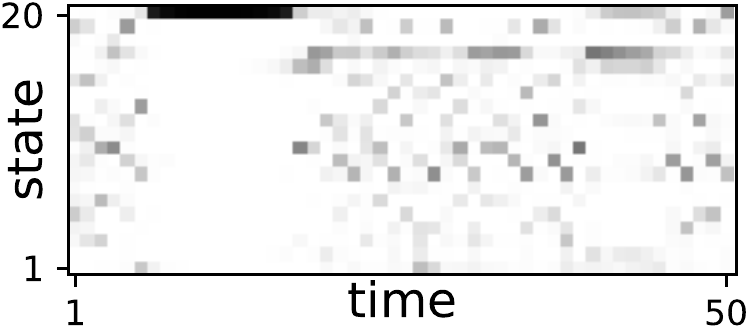}
Posterior marginals\\
\vspace{0.1\textwidth}
\includegraphics[width=1.0\textwidth]{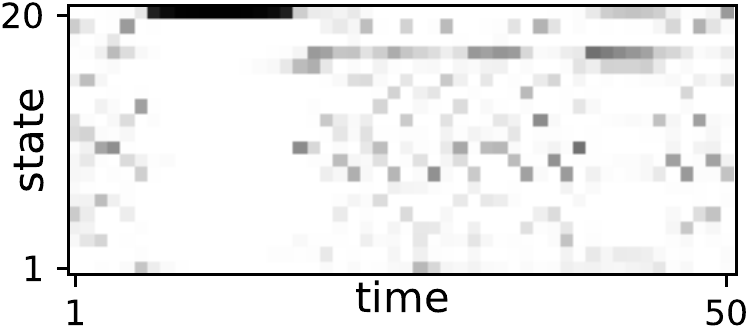}
SMC optimal proposal\\
1000 particles\\
(SMC gold standard)
\end{minipage}
\hspace{0.02\textwidth}
\begin{minipage}[c]{0.20\textwidth} 
\small
\centering
Target algorithms\\
\hrule
\vspace{0.05\textwidth}
\fontsize{8pt}{8pt}
\selectfont
\includegraphics[width=1.0\textwidth]{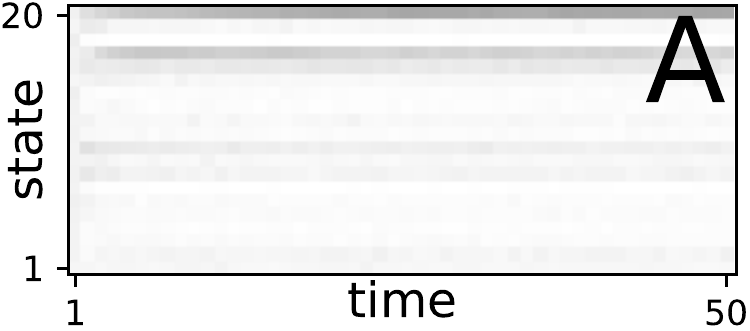}\\
SMC prior proposal\\
1 particle\\
\vspace{0.05\textwidth}
\includegraphics[width=1.0\textwidth]{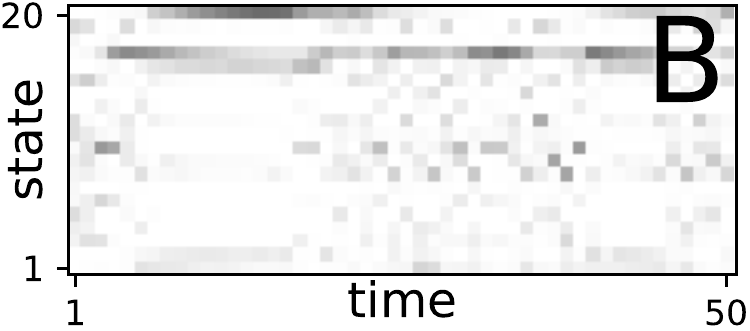}\\
SMC prior proposal\\
10 particles\\
\vspace{0.05\textwidth}
\includegraphics[width=1.0\textwidth]{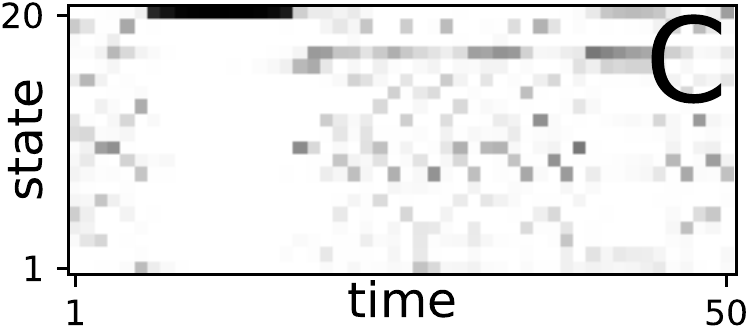}\\
SMC optimal proposal\\
100 particles
\end{minipage}
\hspace{0.02\textwidth}
\begin{minipage}[c]{0.53\textwidth} 
\begin{minipage}[c]{0.49\linewidth}
\small
\centering
Measuring accuracy\\
of target algorithms using\\
posterior as gold-standard\\
\includegraphics[width=1.0\textwidth]{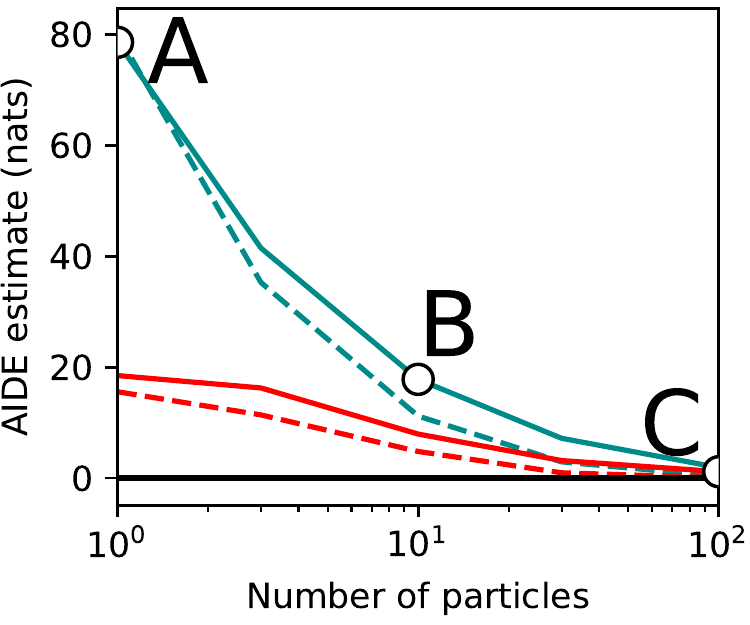}
\end{minipage}
\begin{minipage}[c]{0.49\linewidth}
\small
\centering
Measuring accuracy\\
of target algorithms using\\
SMC gold-standard\\
\includegraphics[width=1.0\textwidth]{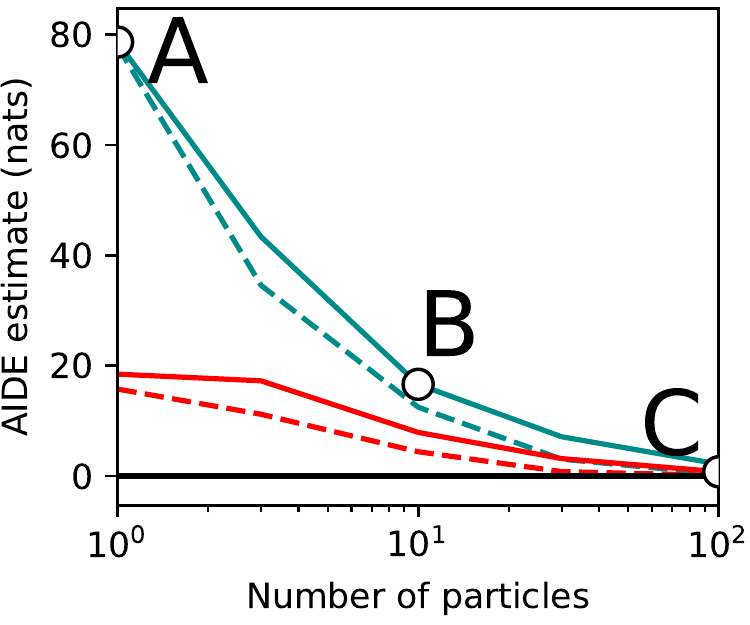}
\end{minipage}
\includegraphics[width=1.0\textwidth]{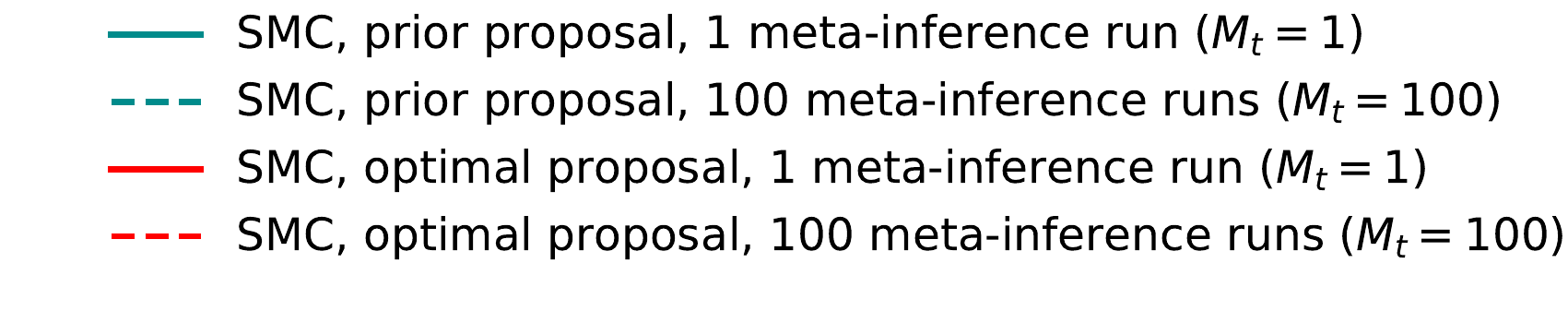}
\end{minipage}
\caption{
Comparing use of an exact posterior as the gold-standard and a `best-in-class' approximate algorithm as the gold-standard, when measuring accuracy of target inference algorithms with AIDE.
We consider inference in an HMM, so that exact posterior sampling is tractable using dynamic programming.
Left: Ground truth latent states, posterior marginals, and marginals of the the output of a gold-standard and three target SMC algorithms (A,B,C) for a particular observation sequence.
Right: AIDE estimates using the exact gold-standard and using the SMC gold-standard are nearly identical.
The estimated divergence bounds decrease as the number of particles in the target sampler increases. 
The optimal proposal outperforms the prior proposal. 
Increasing $M_{\TARGET}$ tightens the estimated divergence bounds. 
We used $M_{\GOLD} = 1$.
}
        \label{fig:hmm}
\end{figure}

\begin{figure}[ht]
        \centering
        \includegraphics[width=0.35\textwidth]{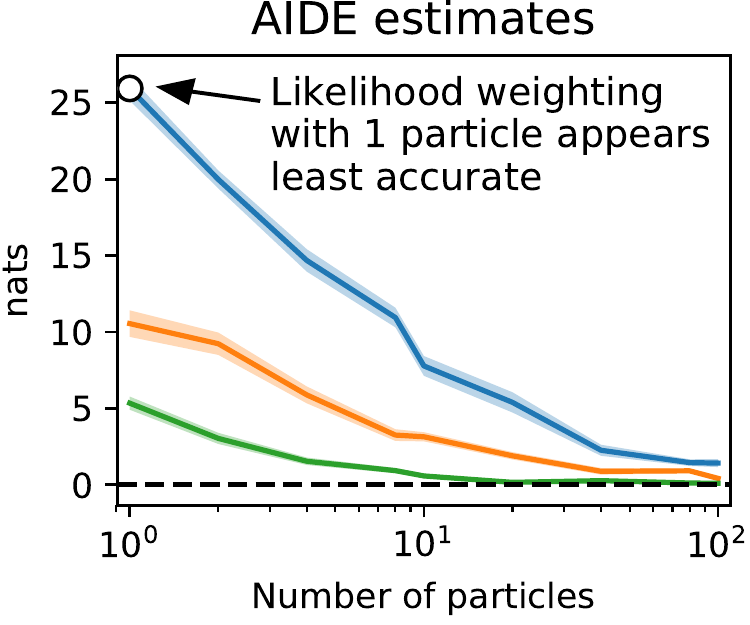}
        \includegraphics[width=0.35\textwidth]{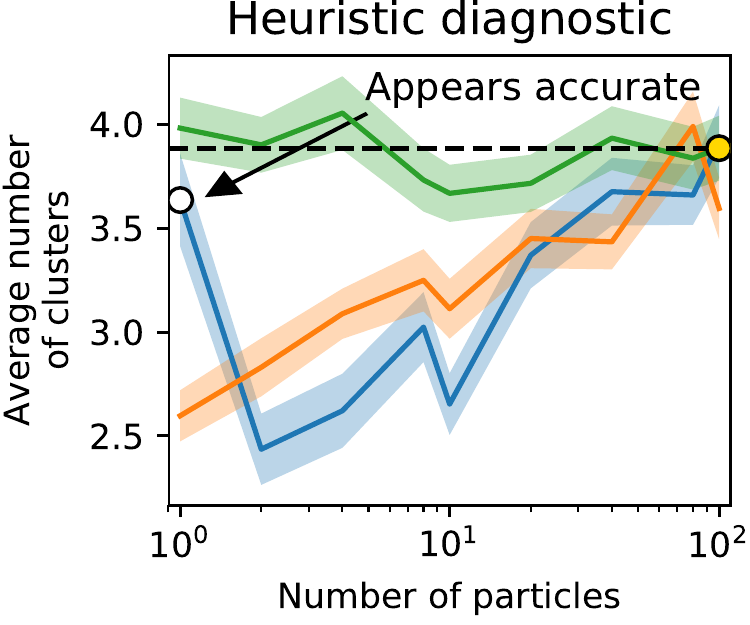}
        \includegraphics[width=0.28\textwidth]{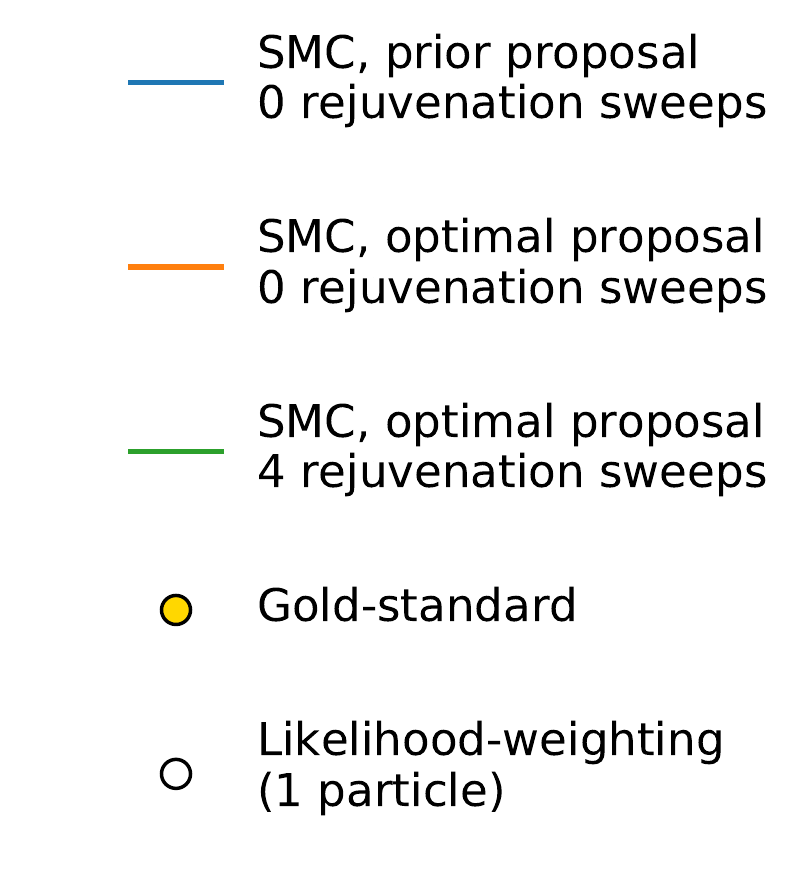}
        \caption{
Contrasting AIDE against a heuristic convergence diagnostic for evaluating the accuracy of approximate inference in a Dirichlet process mixture model (DPMM).
The heuristic compares the expected number of clusters under the target algorithm to the expectation under the gold-standard algorithm \citep{ulker2010sequential}.
White circles identify single-particle likelihood-weighting, which samples from the prior.
AIDE clearly indicates that single-particle likelihood-weighting is inaccurate, but
the heuristic suggests it is accurate.
Probe functions like the expected number of clusters can be error prone measures of convergence because they only track convergence along a specific projection of the distribution. 
In contrast, AIDE estimates a joint KL divergence.
Shaded areas in both plots show the standard error.
The amount of target inference computation used is the same for the two techniques, although AIDE performs a gold-standard meta-inference run for each target inference run.
}
	\label{fig:dpmm_first}
\end{figure}

\subsection{Comparing AIDE to alternative inference evaluation techniques}
\vspace{-1mm}
A key feature of AIDE is that it applies to different types of inference algorithms.
We compared AIDE to two existing techniques for evaluating the accuracy of inference algorithms that share this feature: 
(1) comparing log marginal likelihood (LML) estimates made by a target algorithm against LML estimates made by a gold-standard algorithm, and 
(2) comparing the expectation of a probe function under the approximating distribution to the same expectation under the gold-standard distribution \citep{ulker2010sequential}.
Figure~\ref{fig:bimodal} shows a comparison of AIDE to LML, on a inference problem where the posterior is bimodal.
Figure~\ref{fig:dpmm_first} shows a comparison of AIDE to a `number of clusters' probe function in a Dirichlet process mixture model inference problem for a synthetic data set.
We also used AIDE to evaluate the accuracy of several SMC algorithms for DPMM inference on a real data set of galaxy velocities \citep{drinkwater2004large} relative to an SMC gold-standard.
This experiment is described in the supplement due to space constraints.

\section{Discussion}
AIDE makes it practical to estimate bounds on the error of a broad class of approximate inference algorithms including sequential Monte Carlo (SMC), annealed importance sampling (AIS), sampling importance resampling (SIR), and variational inference.
AIDE's reliance on a gold-standard inference algorithm raises two questions that merit discussion:

\emph{If we already had an acceptable gold-standard, why would we want to evaluate other inference algorithms?}
Gold-standard algorithms such as very long MCMC runs, SMC runs with hundreds of thousands of particles, or AIS runs with a very fine annealing schedule, are often too slow to use in production.
AIDE make it possible to use gold-standard algorithms during an offline design and evaluation phase to quantitatively answer questions like ``how few particles or rejuvenation steps or samples can I get away with?'' or ``is my fast variational approximation good enough?''.
AIDE can thus help practitioners confidently apply Monte Carlo techniques in challenging, performance constrained applications, such as probabilistic robotics or web-scale machine learning.
In future work we think it will be valuable to build probabilistic models of AIDE estimates, conditioned on features of the data set, to learn offline what problem instances are easy or hard for different inference algorithms.
This may help practitioners bridge the gap between offline evaluation and production more rigorously.

\emph{How do we ensure that the gold-standard is accurate enough for the comparison with it to be meaningful?}
This is an intrinsically hard problem---we are not sure that near-exact posterior inference is really feasible, for most interesting classes of models.
In practice, we think that gold-standard inference algorithms will be calibrated based on a mix of subjective assumptions and heuristic testing---much like models themselves are tested.
For example, users could initially build confidence in a gold-standard algorithm by estimating the symmetric KL divergence from the posterior on simulated data sets (following the approach of \citet{grosse2016measuring}), and then use AIDE with the trusted gold-standard for a focused evaluation of target algorithms on real data sets of interest.
We do not think the subjectivity of the gold-standard assumption is a unique limitation of AIDE.

A limitation of AIDE is that its bias depends on the accuracy of meta-inference, i.e. inference over the auxiliary random variables used by an inference algorithm.
We currently lack an accurate meta-inference algorithm for MCMC samplers that do not employ annealing, and therefore AIDE is not yet suitable for use as a general MCMC convergence diagnostic.
Research on new meta-inference algorithms for MCMC and comparisons to standard convergence diagnostics \citep{gelman1992inference, geweke2004getting} are needed.
Other areas for future work include understanding how the accuracy of meta-inference depends on parameters of an inference algorithm, and more generally what makes an inference algorithm amenable to efficient meta-inference.

Note that AIDE does not rely on asymptotic exactness of the inference algorithm being evaluated.
An interesting area of future work is in using AIDE to study the non-asymptotic error of scalable but asymptotically biased sampling algorithms \citep{angelino2016patterns}.
It also seems fruitful to connect AIDE to results from theoretical computer science, including the computability \citep{ackerman2010computability} and complexity \citep{freer2010probabilistic,huggins2015convergence,chatterjee2015sample,agapiou2017importance} of probabilistic inference.
It should be possible to study the computational tractability of approximate inference empirically using AIDE estimates, as well as theoretically using a careful treatment of the variance of these estimates.
It also seems promising to use ideas from AIDE to develop Monte Carlo program analyses for samplers written in probabilistic programming languages.

\subsubsection*{Acknowledgments}
This research was supported by DARPA (PPAML program, contract number FA8750-14-2-0004), IARPA (under research contract 2015-15061000003), the Office of Naval Research (under research contract N000141310333), the Army Research Office (under agreement number W911NF-13-1-0212), and gifts from Analog Devices and Google.  
This research was conducted with Government support under and awarded by DoD, Air Force Office of Scientific Research, National Defense Science and Engineering Graduate (NDSEG) Fellowship, 32 CFR 168a.

\bibliographystyle{unsrtnat}
\bibliography{references}

\clearpage
\appendix
\setcounter{algorithm}{2}
\section{Sequential Monte Carlo}
An SMC sampler template based on \citep{del2006sequential} is reproduced in Algorithm~\ref{alg:smc}. 
The algorithm evolves a set of $P$ particles to approximate a sequence of target distributions using a combination of proposal kernels, weighting, and resampling steps.
The final target distribution in the sequence is typically the posterior $p(x|y)$.
In our version, the algorithm resamples once from the final weighted particle approximation and returns this particle as its output sample $x$.
Specifically, the algorithm uses a sequence of unnormalized target densities $\tilde{p}_t$ defined on spaces $\mathcal{X}_t$ for $t=1\ldots T$, with $\mathcal{X}_T = \mathcal{X}$ (the original space of latent variables in the generative model) and $\tilde{p}_T(x) := p(x, y)$. 
The algorithm also makes use of an initialization kernel $k_1$ defined on $\mathcal{X}_1$, proposal kernels $k_t$ defined on $\mathcal{X}_t$ and indexed by $\mathcal{X}_{t-1}$ for $t=2\ldots T$, and backward kernels $\ell_t$ defined on $\mathcal{X}_{t-1}$ and indexed by $\mathcal{X}_t$ for $t=2\ldots T$. 
For simplicity of analysis we assume that resampling occurs at every step in the sequence.
The weight functions used in the algorithm are:
\begin{equation}
w_1(x_1^i) := \frac{\tilde{p}_1(x_1^i)}{k_1(x_1^i)}\;\;\;\;\;\;\;\;\;
w_t(x_{t-1}^j, x_t^i) := \frac{\tilde{p}_t(x_t^i) \ell_t\left(x_{t-1}^j;x_t^i\right)}{\tilde{p}_{t-1}\left(x_{t-1}^j\right) k_t\left(x_t^i;x_{t-1}^j\right)}
\end{equation}
Note that Algorithm~\ref{alg:smc} does not sample from the backward kernels, which serve to define the extended target densities $\tilde{p}_t(x_t) \prod_{s=2}^t \ell_s(x_{s-1}; x_s)$ that justify the SMC sampler as a sequential importance sampler \citep{del2006sequential}.
When $P = 1$, $\mathcal{X}_t = \mathcal{X}$ for all $t$, $k_t(x_t; x_{t-1})$ is a detailed balance transition operator for $p_{t-1}$, and $\ell_t = k_t$, the algorithm reduces to AIS.\footnote{More generally, the proposal kernel $k_t$ needs to have stationary distribution $p_{t-1}$. The backward kernel $\ell_t$ is the `reversal' of $k_t$ as defined in \cite{neal2001annealed}. When the proposal kernel satisfies detailed balance, it is its own reversal, and therefore sampling from the backward kernel is identical to sampling from the forward kernel.}
The particle filter without rejuvenation \citep{gordon1993novel} is also a special case of Algorithm~\ref{alg:smc}.
A variety of other SMC variants can also be seen to be special cases of this formulation \citep{del2006sequential}.
\begin{algorithm}
\caption{Sequential Monte Carlo} \label{alg:smc}
\begin{algorithmic}
    \For{$i \gets 1\ldots P$}
        \State $x^i_1 \sim k_1(\cdot)$ \Comment{Initialize particle $i$}
        \State $w^i_1 \gets w_1(x_1^i)$ \Comment{Initial weight for particle $i$}
    \EndFor
    \For{$t \gets 2\ldots T$}
        \State $W_{t-1}^{1:P} \gets w_{t-1}^{1:P} / (\sum_{i=1}^Pw_{t-1}^i )$ \Comment{Normalize weights $w_{t-1}^{1:P} = (w_{t-1}^1, \ldots, w_{t-1}^P)$}
        \For{$i \gets 1\ldots P$}
            \State $a_{t-1}^i \sim \mbox{Categorical}(W_{t-1}^{1:P})$ \Comment{Sample index of parent for particle $i$}
            \State $x^i_t \sim k_t\left(\cdot;x_{t-1}^{a_{t-1}^i}\right)$ \Comment{Sample value for new particle $i$}
            \State $w_t^i \gets w_t(x_{t-1}^{a_{t-1}^i}, x_t^i)$ \Comment{Compute weight for particle $i$}
        \EndFor
    \EndFor
    \State $I_T \sim \small\mbox{Categorical}(W_T^{1:P})$ \Comment{Sample particle index for output sample}
    \State $x \gets x_T^{I_T}$
    \State \Return $x$  \Comment{Return the output sample}
\end{algorithmic}
\end{algorithm}
The SMC marginal likelihood estimate $\widehat{p(y)}$ is computed from the weights $w_t^i$ generated during the SMC algorithm according to:
\begin{align}
\quad\quad\quad\quad
\quad\quad\quad\quad
\quad\quad\quad\quad
\widehat{p(y)}
&= \prod_{t=1}^T \frac{1}{P} \sum_{i=1}^P w_t^i
\end{align}
Note that an SMC marginal likelihood estimate can also be computed from the weights $w_t^i$ generated during the generalized conditional SMC algorithm.
Note that $\widehat{p(y)}$ a function of the SMC trace $u$.
To relate $\xi(u, x)$ to $\widehat{p(y)}$, we write the joint density of the generative inference model for SMC:
\begin{align}
q(u, x) := \left[ \prod_{i=1}^P k_1(x^i_1) \right] \left[ \prod_{t=2}^T \prod_{i=1}^P \frac{w_{t-1}^{a^i_{t-1}}}{\sum_{j=1}^P w_{t-1}^j} k_t(x^i_t; x_{t-1}^{a_{t-1}^i})\right]
\left[ \frac{w_T^{I_T}}{\sum_{j=1}^P w_T^j}\right]  \delta(x, x_T^{I_T})
\end{align}
The the canonical meta-inference sampler (Algorithm~1) for SMC takes as input a latent sample $x$ and returns a trace $u = (\mathbf{x}, \mathbf{a}, I_T)$ of Algorithm~\ref{alg:smc}, containing all particles at all time steps $\mathbf{x}$, all parent indices $\mathbf{a}$, and the final output particle index $I_T$.
The density on outputs of the meta-inference sampler is given by:
\begin{align}
r(u; x)
:= \delta(x_T^{I_T}, x) \frac{1}{P^T} \left[ \prod_{t=2}^T \ell_t(x_{t-1}^{I_{t-1}}; x_t^{I_t}) \right] \left[ \prod_{\substack{i=1\\i \ne I_1}}^P k_1(x^i_1) \right]
\left[ \prod_{t=2}^T \prod_{\substack{i=1\\i \ne I_t}}^P \frac{w_{t-1}^{a^i_{t-1}}}{\sum_{j=1}^P w_{t-1}^j} k_t(x^i_t; x_{t-1}^{a_{t-1}^i}) \right]
\end{align}
Taking the ratio $q(u,x)/r(u;x)$ and simplifying gives $p(x, y) / \widehat{p(y)}$.
Therefore, the quantity $\xi(u, x) = q(u, x) / r(u; x)$ can be computed from an SMC trace $u$, in terms of the SMC marginal likelihood estimate and the unnormalized posterior density $p(x, y)$.

\section{AIDE specialized for evaluating variational inference using AIS}
To make AIDE more concrete for the reader, we provide the AIDE algorithm when specialized to measure the symmetrized KL divergence between a variational approximation $q_{\theta}(x)$ and an annealed importance sampler (AIS).
For variational inference, there is no meta-inference sampler necessary because we can evaluate the variational approximation density as discussed in the main text.
The meta-inference sampler for AIS consists of running the AIS chain in reverse, starting from a latent sample.
The trace $u$ that is generated is the vector of intermediate states $\mathbf{x} = (x_1, \ldots, x_T)$ in the chain.
The AIS marginal likelihood estimate $\widehat{p(y)}$ can be computed from a trace $u$ of the AIS algorithm in terms of the weights (which are themselves deterministic functions of the trace):
\begin{equation}
\quad\quad\quad\quad
\quad\quad\quad\quad
\quad\quad\quad\quad
    \widehat{p(y)} = \frac{\tilde{p}_1(x_1)}{k_1(x_1)} \prod_{t=2}^T \frac{\tilde{p}_t(x_t)}{\tilde{p}_{t-1}(x_t)}
\end{equation}
Note that $\widehat{p(y)}$ can be computed from a trace $u$ that is generated either by a `forward' run of AIS, or a reverse run of AIS.
Algorithm~\ref{alg:ais-variational-aide} gives a concrete instantiation of AIDE (Algorithm~2) simplified for the case when the gold-standard algorithm is an AIS sampler, and the target algorithm being evaluated is a variational approximation.
We further simplify the algorithm by fixing $M_{\GOLD} = 1$, where AIS is the gold-standard.
The AIS algorithm must support two primitives: $\textproc{ais}.\textproc{forward}()$, which runs AIS forward and returns the resulting output sample $x$ and the resulting marginal likelihood estimate $\widehat{p(y)}$, and $\textproc{ais}.\textproc{reverse}(x)$, which takes as input a sample $x$ and runs the same AIS chain in reverse order, returning the resulting marginal likelihood estimate $\widehat{p(y)}$.
\begin{algorithm}
\footnotesize
\caption{AIDE specialized for measuring divergence between variational inference and AIS} \label{alg:ais-variational-aide}
\begin{algorithmic}
    \Require 
        \begin{varwidth}[t]{\linewidth}
        \begin{tabular}[t]{ll}
        AIS algorithm& $\textproc{ais}.\textproc{forward}()$ and $\textproc{ais}.\textproc{reverse}(x)$\\
        Trained variational approximation &$q_{\theta}(x)$\\
        Number of AIS forward samples & $N_{\GOLD}$\\
        Number of variational samples & $N_{\TARGET}$
        \end{tabular}
        \end{varwidth}
    \Statex
    \For{$n \gets 1\ldots N_{\GOLD}$}
        \LineComment{Run AIS forward, record the marginal likelihood estimate $\widehat{p(y)}_{n}$ and the final state in chain $x_n$}
        \State $\left( \widehat{p(y)}_{n}, x_n\right) \sim \textproc{ais}.\textproc{forward}()$
    \EndFor
    \Statex
    \For{$n \gets 1\ldots N_{\TARGET}$}
        \LineComment{Generate sample $x_n'$ from the variational approximation}
        \State $x_n' \sim q_{\theta}(x)$
        \LineComment{Run AIS in reverse, starting from $x_n'$, and record resulting marginal likelihood estimate $\widehat{p(y)}^{\prime}_{n}$}
        \State $\widehat{p(y)}^{\prime}_{n} \sim \textproc{ais}.\textproc{reverse}(x_n')$
    \EndFor
    \Statex
    \LineComment{Compute AIDE estimate}
    \State $
\hat{D} \gets
            \displaystyle \frac{1}{N_{\GOLD}} \sum_{n=1}^{N_{\GOLD}}
                \log \left(
                    \frac{p(x_n, y)}{q_{\theta}(x_n)\widehat{p(y)}_{n}}
                \right)
            - \frac{1}{N_{\TARGET}} \sum_{n=1}^{N_{\TARGET}}
                \log \left(
                    \frac{p(x_n', y)}{q_{\theta}(x_n') \widehat{p(y)}_{n}^{\prime}}
                \right)$
    \State \Return $\hat{D}$
\end{algorithmic}
\end{algorithm}

\section{Proofs} \label{sec:proofs}

\begin{restatable}{theorem}{elboconsistency2}
    The estimate $\hat{D}$ produced by AIDE is an upper bound on the symmetrized KL divergence in expectation, and the expectation is nonincreasing in AIDE parameters $M_{\GOLD}$ and $M_{\TARGET}$. \end{restatable}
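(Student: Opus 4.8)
The plan is to reduce $\E[\hat D]$ to four single-run expectations and analyze them through two estimator types. By linearity and the i.i.d. structure over the index $n$, it suffices to understand, for a fixed output $x$, a \emph{numerator} estimator of the form $\frac{1}{M}\sum_m \xi(u_m,x)$ in which $u_1\sim q(u\mid x)$ (recorded from the forward run of the algorithm) while $u_2,\dots,u_M\sim r(u;x)$ (from the meta-inference sampler), together with a \emph{denominator} estimator of the same form in which all $M$ traces are drawn from $r(u;x)$. The design choice that the first numerator trace is a genuine sample from $q(u\mid x)$, rather than from $r$, is exactly what makes the two halves of AIDE asymmetric and is essential to obtaining an upper bound.

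First I would establish two moment facts for fixed $x$. For the all-$r$ denominator estimator, $\E_{u\sim r}[\xi(u,x)] = Z\int q(u,x)\,du = Z\,q(x)$, so the average is unbiased for $Z q(x)$ and concavity of $\log$ with Jensen's inequality gives $\E[\log(\text{average})]\le \log(Z q(x))$. For the one-from-$q$ numerator estimator I would instead prove that its \emph{reciprocal} is unbiased for $1/(Z q(x))$, whence convexity of $-\log$ gives $\E[\log(\text{average})]\ge \log(Z q(x))$. The clean route is to symmetrise the distinguished trace over a uniform index $K\in\{1,\dots,M\}$ and substitute $q(u_K\mid x)=\xi(u_K,x)\,r(u_K;x)/(Z q(x))$; this exhibits the law of the numerator samples as the pure-$r$ product $\rho_M(u_{1:M})=\prod_m r(u_m;x)$ reweighted by $\xi(u_K,x)/(Z q(x))$, and averaging over $K$ collapses the reweighting into the factor $\hat q^{(M)}/(Z q(x))$ multiplying any function of $\hat q^{(M)}:=\frac1M\sum_m\xi(u_m,x)$. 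This yields the key identity $\E[\psi(\hat q^{(M)})]=\frac{1}{Z q(x)}\,\E_{\rho_M}[\hat q^{(M)}\psi(\hat q^{(M)})]$ for arbitrary $\psi$: taking $\psi(y)=1/y$ recovers the reciprocal-unbiasedness and $\psi=\log$ produces the quantity of interest. Assembling the four terms, the $\log Z_{\GOLD}$ and $\log Z_{\TARGET}$ constants cancel between the two halves, leaving $\E[\hat D]\ge \kl{q_{\GOLD}(x)}{q_{\TARGET}(x)}+\kl{q_{\TARGET}(x)}{q_{\GOLD}(x)}$ with a nonnegative gap reducing to the four KL terms of Equation~(\ref{eq:general_gap}) at $M_{\GOLD}=M_{\TARGET}=1$.

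For monotonicity I would use that sample means of i.i.d. variables decrease in the convex order: writing $\bar w_M=\frac1M\sum_{m=1}^M w_m$, exchangeability gives $\E[\bar w_M\mid \bar w_{M+1}]=\bar w_{M+1}$, so $\E[\phi(\bar w_M)]\ge \E[\phi(\bar w_{M+1})]$ for every convex $\phi$. For the all-$r$ denominators I apply this with the convex $\phi=-\log$ to get $\E[\log(\text{average})]$ nondecreasing in $M$; since these enter $\hat D$ with a negative sign, they push the bias down as $M$ grows. For the one-from-$q$ numerators I apply the change-of-measure identity with $\psi=\log$ to write the term as $\frac{1}{Z q(x)}\E_{\rho_M}[\hat q^{(M)}\log \hat q^{(M)}]$, an expectation of the \emph{convex} function $\phi(y)=y\log y$ of the pure-$r$ sample mean, which is therefore nonincreasing in $M$. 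Both effects shrink each of the four nonnegative bias contributions, giving the claimed monotonicity in $M_{\GOLD}$ and $M_{\TARGET}$.

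The main obstacle is the monotonicity, and even the direction of the bound, for the one-from-$q$ numerator estimators. A direct leave-one-out plus Jensen argument — precisely what proves the standard multi-sample lower bound for the all-$r$ estimators — gives the \emph{wrong} direction here, because deleting the distinguished $q$-trace does not leave a smaller estimator of the same type. The resolution is the change-of-measure identity, which converts the awkward reciprocal/harmonic-mean estimator into an ordinary convex functional $y\log y$ of a pure-$r$ sample mean, after which the convex-order monotonicity applies uniformly to both estimator types. Verifying that this augmented law is correctly normalised and that the distinguished position may be symmetrised over a uniform index are the routine checks that make the step rigorous.
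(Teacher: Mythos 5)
Your proposal is correct, and it reaches the theorem by a route that differs from the paper's in both of its main steps, even though the underlying symmetrization device is shared. The paper also splits $\hat D$ into the same four per-run terms and uses the same permutation-invariance observation (its Equation~(\ref{eq:mixture-step})), but it then identifies each term \emph{exactly}: the one-from-$q$ numerator expectation equals $\log(Zq(x))$ plus $\kl{\eta^{M}}{\lambda^{M}}$ and the all-$r$ denominator expectation equals $\log(Zq(x))$ minus $\kl{\lambda^{M}}{\eta^{M}}$, where $\eta^M$ is the symmetrized one-from-$q$ mixture and $\lambda^M$ the pure-$r$ product (its Equation~(\ref{eq:est-kl})); the bound then follows from nonnegativity of KL, and monotonicity from joint convexity of KL applied to the recursion $\eta^{M} = \frac{1}{M}\sum_k \eta^{M-1}\, r(u_k;x)$. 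You instead get the bound from two moment facts plus Jensen (unbiasedness of the denominator for $Zq(x)$, reciprocal-unbiasedness of the numerator for $1/(Zq(x))$ via your change-of-measure identity $\E_{\eta^M}[\psi(\hat q^{(M)})] = \frac{1}{Zq(x)}\E_{\rho_M}[\hat q^{(M)}\psi(\hat q^{(M)})]$, which is the same fact as the paper's $\eta^M = \frac{\hat q^{(M)}}{Zq(x)}\,\rho_M$ in disguise), and you get monotonicity from the convex order of i.i.d.\ sample means ($\E[\bar w_M \mid \bar w_{M+1}] = \bar w_{M+1}$ plus conditional Jensen), applied with $\phi = -\log$ for the denominators and, after the change of measure, with $\phi(y) = y\log y$ for the numerators. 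Your diagnosis of the obstacle is also accurate: a naive leave-one-out argument fails for the one-from-$q$ terms, and your $y\log y$ trick resolves exactly the difficulty that the paper resolves with convexity of KL. The trade-off: the paper's identification gives the exact bias formula of Equation~(\ref{eq:general_gap}) for every $M_{\GOLD}, M_{\TARGET}$ as a byproduct, whereas your claim that the gap reduces to those four KL terms at $M_{\GOLD}=M_{\TARGET}=1$ is stated but would need the small extra computation; in exchange, your argument is more modular --- it handles both estimator types with a single standard convex-order lemma (the one familiar from multi-sample variational bounds) rather than two separate KL-convexity computations.
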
 
\begin{proof}
We consider the general case of two inference algorithms $a$ and $b$ with generative inference models $(\mathcal{U}, \mathcal{X}, q_a)$ and $(\mathcal{V}, \mathcal{X}, q_b)$, and meta-inference algorithms $(r_a, \xi_a)$ and $(r_b, \xi_b)$ with normalizing constants $Z_a$ and $Z_b$ respectively.
For example $a$ may be `target' inference algorithm and $b$ may be the `gold standard' inference algorithm.
Note that the analysis of AIDE is symmetric in $a$ and $b$.
First, we define the following quantity relating $a$ and $b$:
\begin{equation}
\mathcal{L}_{ab} := \E_{x \sim q_a(x)} \left[ \log \frac{Z_b q_b(x)}{Z_a q_a(x)} \right] = \log \frac{Z_b}{Z_a} - \kl{q_a(x)}{q_b(x)}
\end{equation}
When $Z_a = 1$ and when $b$ is a rejection sampler for the posterior $p(x|y)$, we have that $Z_b = p(x, y)$, and $\mathcal{L}_{ab}$ is the `ELBO' of inference algorithm $a$ with respect to the posterior.
We also define the quantity:
\begin{equation}
\mathcal{U}_{ab} := -\mathcal{L}_{ba}  = \log \frac{Z_b}{Z_a} + \kl{q_b(x)}{q_a(x)}
\end{equation}
Note that $\mathcal{U}_{ab} - \mathcal{L}_{ab} = \mathcal{U}_{ba} - \mathcal{L}_{ba}$ is the symmetrized KL divergence between $q_a(x)$ and $q_b(x)$.
The AIDE estimate can be understood as a difference of an estimate of $\mathcal{U}_{ab}$ and an estimate of $\mathcal{L}_{ab}$.
Specifically, we define the following estimator of $\mathcal{L}_{ab}$:
\begin{align}
\hat{\mathcal{L}}_{ab}^{N_a,M_a,M_b} &:= \frac{1}{N_a} \sum_{n=1}^{N_a} \log \left(
\frac
{\frac{1}{M_b} \sum_{k=1}^{M_b} \xi_b(v_{n,k},x_n)}
{\frac{1}{M_a} \sum_{k=1}^{M_a} \xi_a(u_{n,k},x_n)} \right)\;\;
\\
&= \frac{1}{N_a} \sum_{n=1}^{N_a} \log \left(
\frac
{\frac{1}{M_b} \sum_{k=1}^{M_b} Z_b\frac{q_b(v_{n,k},x_n)}{r_b(v_{n,k};x_n)}}
{\frac{1}{M_a} \sum_{k=1}^{M_a} Z_a \frac{q_a(u_{n,k},x_n)}{r_a(u_{n,k};x_n)}} \right)\\
&\mbox{where: }\begin{array}{l}
x_n \sim q_a(x)\; \mbox{for }n = 1\ldots N_a\\
u_{n,1} | x_n \sim q_a(u|x)\; \mbox{for }n = 1\ldots N_a\\
u_{n,k} | x_n \sim r_a(u;x)\; \mbox{for } n = 1\ldots N_a \mbox{ and } k=2\ldots M_a\\
v_{n,k} | x_n \sim r_b(v;x)\; \mbox{for } n = 1\ldots N_a \mbox{ and } k=1\ldots M_b
\end{array}\nonumber
\end{align}
\noindent We now analyze the expectation $\E [ \hat{\mathcal{L}}_{ab}^{1,M_a,M_b} ]$ and how it depends on $M_a$ and $M_b$.
We use the notation $u_{i:j} = (u_i, \ldots, u_j)$.
First, note that:
\begin{align}
\E [ \hat{\mathcal{L}}_{ab}^{1,M_a,M_b} ]
&= \log \frac{Z_b}{Z_a} + \E_{x \sim q_a(x)} \left[ \log \frac{q_b(x)}{q_a(x)} \right]\nonumber\\
&\quad + \E_{\begin{subarray}{l}x \sim q_a(x)\\v_{1:M_b}|x\overset{iid}{\sim} r_b(v;x)\end{subarray}} \left[ \log \frac{1}{M_b} \sum_{k=1}^{M_b} \frac{q_b(v_k|x)}{r_b(v_k;x)} \right]\nonumber\\
&\quad -\E_{\begin{subarray}{l}x \sim q_a(x)\\u_1|x \sim q_a(u|x)\\u_{2:M_a}|x \overset{iid}{\sim} r_a(u;x)\end{subarray}} \left[ \log \frac{1}{M_a} \sum_{k=1}^{M_a} \frac{q_a(u_k|x)}{r_a(u_k;x)} \right]\\
&= \mathcal{L}_{ab}
+\E_{\begin{subarray}{l}x \sim q_a(x)\\v_{1:M_b}|x\overset{iid}{\sim} r_b(v;x)\end{subarray}} \left[ \log \frac{1}{M_b} \sum_{k=1}^{M_b} \frac{q_b(v_k|x)}{r_b(v_k;x)} \right]\nonumber\\
&\quad\quad\quad-\E_{\begin{subarray}{l}x \sim q_a(x)\\u_1 \sim q_a(u|x)\\u_{2:M_a}|x \overset{iid}{\sim} r_a(u;x)\end{subarray}} \left[ \log \frac{1}{M_a} \sum_{k=1}^{M_a} \frac{q_a(u_k|x)}{r_a(u_k;x)} \right] \label{eq:elbo-unroll}
\end{align}
We define the following families of densities on $v_{1:M_b}$, indexed by $x$:
\begin{align}
    \eta_b^{M_b}(v_{1:M_b}; x) &= \frac{1}{M_b} \sum_{k=1}^{M_b} q_b(v_k | x) \prod_{\substack{\ell = 1\\\ell \ne k}}^{M_b} r_b(v_{\ell}; x)\\
    \lambda_b^{M_b}(v_{1:M_b}; x) &= \prod_{k=1}^{M_b} r_b(v_k; x)
\end{align}
and similarly for $u_{1:M_a}$:
\begin{align}
    \eta_a^{M_a}(v_{1:M_a}; x) &= \frac{1}{M_a} \sum_{k=1}^{M_a} q_a(u_k | x) \prod_{\substack{\ell = 1\\\ell \ne k}}^{M_a} r_a(u_{\ell}; x)\\
    \lambda_a^{M_a}(u_{1:M_a}; x) &= \prod_{k=1}^{M_a} r_a(u_k; x)
\end{align}

\noindent Taking the first expectation in Equation~(\ref{eq:elbo-unroll}):\\
\begin{align}
    &\E_{\begin{subarray}{l}x \sim q_a(x)\\v_{1:M_b}\overset{iid}{\sim} r_b(v;x)\end{subarray}}\left[ \log \frac{1}{M_b} \sum_{k=1}^{M_b} \frac{q_b(v_k|x)}{r_b(v_k;x)} \right]\\
    &= \E_{\begin{subarray}{l}x \sim q_a(x)\\v_{1:M_b}\overset{iid}{\sim} r_b(v;x)\end{subarray}} \left[ \log \frac{\frac{1}{M_b} \sum_{k=1}^{M_b} q_b(v_k|x) \prod_{\substack{\ell = 1\\\ell \ne k}}^{M_b} r_b(v_{\ell};x)}{\prod_{k=1}^{M_b} r_b(v_k;x)} \right]\\
        &= \E_{\begin{subarray}{l}x \sim q_a(x)\\v_{1:M_b} \sim \lambda_b^{M_b}(v_{1:M_b}; x)\end{subarray}} \left[ \log \frac{\eta_b^{M_b}(v_{1:M_b};x)}{\lambda_b^{M_b}(v_{1:M_b};x)}\right]\\
    &= - \E_{x \sim q_a(x)} \left[ \kl{\lambda_b^{M_b}(v_{1:M_b};x)}{\eta_b^{M_b}(v_{1:M_b};x)} \right] \label{eq:substitute-for-b}
\end{align}

\noindent Taking the second expectation in Equation~(\ref{eq:elbo-unroll}):
\begin{align}
    &\E_{\begin{subarray}{l}x \sim q_a(x)\\u_1 \sim q_a(u|x)\\u_{2:M_a} \overset{iid}{\sim} r_a(u;x)\end{subarray}} \left[ \log \frac{1}{M_a} \sum_{k=1}^{M_a} \frac{q_a(u_k|x)}{r_a(u_k;x)} \right]\\
    &= \E_{\begin{subarray}{l}x \sim q_a(x)\\u_1 \sim q_a(u|x)\\u_{2:M_a} \overset{iid}{\sim} r_a(u;x)\end{subarray}}\left[ \log \frac{\frac{1}{M_a} \sum_{k=1}^{M_a} q_a(u_k|x) \prod_{\substack{\ell = 1\\\ell \ne k}}^{M_a} r_a(u_{\ell};x)}{\prod_{k=1}^{M_a} r_a(u_k;x)} \right]\\
        &= \E_{\begin{subarray}{l}x \sim q_a(x)\\u_1 \sim q_a(u|x)\\u_{2:M_a} \overset{iid}{\sim} r_a(u; x)\end{subarray}} \left[ \log \frac{\eta_a^{M_a}(u_{1:M_a}; x)}{\lambda_a^{M_a}(u_{1:M_a}; x)} \right]\\
    &= \E_{\begin{subarray}{l}x \sim q_a(x)\\u_{1:M_a} \sim \eta_a^{M_a}(u_{1:M_a}; x)\end{subarray}} \left[ \log \frac{\eta_a^{M_a}(u_{1:M_a}; x)}{\lambda_a^{M_a}(u_{1:M_a}; x)} \right] \label{eq:mixture-step}\\
    &= \E_{x \sim q_a(x)} \left[ \kl{\eta_a^{M_a}(u_{1:M_a}; x)}{\lambda_a^{M_a}(u_{1:M_a}; x)} \right] \label{eq:substitute-for-a}
\end{align}
\noindent where to obtain Equation~(\ref{eq:mixture-step}) we used the fact that $\log (\eta_a^{M_a}(u_{1:M_a}; x) / \lambda_a^{M_a}(u_{1:M_a}; x))$ is invariant to permutation of its arguments $u_{1:M_a}$.
Substituting the expression given by Equation~(\ref{eq:substitute-for-b}) and the expression given by Equation~(\ref{eq:substitute-for-a}) into Equation~(\ref{eq:elbo-unroll}), we have:
\begin{align}
\E [ \hat{\mathcal{L}}_{ab}^{1,M_a,M_b} ] 
    &= \mathcal{L}_{ab} - \mathbb{E}_{x \sim q_a(x)} \left[ \kl{\lambda_b^{M_b}(v_{1:M_b};x)}{\eta_b^{M_b}(v_{1:M_b};x)} \right]\\
        &\quad\quad\quad- \mathbb{E}_{x \sim q_a(x)} \left[ \kl{\eta_a^{M_a}(u_{1:M_a}; x)}{\lambda_a^{M_a}(u_{1:M_a}; x)} \right] \label{eq:est-kl}
\end{align}
From non-negativity of KL divergence:
\begin{align}
\E [ \hat{\mathcal{L}}_{ab}^{1,M_a,M_b} ] \le \mathcal{L}_{ab}
\end{align}
\noindent Next, we show that $\E [ \hat{\mathcal{L}}_{ab}^{1,M_a,M_b} ]$ is nondecreasing in both $M_a$ and $M_b$.
First, we show this for $M_a$.
We introduce the notation $u_{1:k-1:k+1:M_a} := (u_1, \ldots, u_{k-1}, u_{k+1}, \ldots, M_a)$ to denote the subvector of length $M_a - 1$ obtained by removing element $k$ from vector $u_{1:M_a}$, where $u_{1:0:2:M_a} := u_{2:M_a}$ and $u_{1:M_a-1:M_a+1:M_a} := u_{1:M_a-1}$.
Note that:
\begin{equation}
    \eta_a^{M_a}(u_{1:M_a}; x) = \frac{1}{M_a} \sum_{k=1}^{M_a} \eta_a^{M_a - 1}(u_{1:k-1:k+1:M_a}; x) r_a(u_k; x)
\end{equation} 
By convexity of KL divergence, we have:
\begin{align}
    &\kl{\eta_a^{M_a}(u_{1:M_a}; x)}{\lambda_a^{M_a}(u_{1:M_a}; x)}\\
    &\quad\le \frac{1}{M_a} \sum_{k=1}^{M_a} \kl{\eta_a^{M_a-1}(u_{1:k-1:k+1:M_a}; x) r_a(u_k; x)}{\lambda_a^{M_a}(u_{1:M_a}; x)}\\
    &\quad= \frac{1}{M_a} \sum_{k=1}^{M_a} \kl{\eta_a^{M_a-1}(u_{1:M_a-1}; x)}{\lambda_a^{M_a-1}(u_{1:M_a-1}; x)}\\
    &\quad= \kl{\eta_a^{M_a-1}(u_{1:M_a-1}; x)}{\lambda_a^{M_a-1}(u_{1:M_a-1}; x)}
\end{align}
A similar argument can be used to show that:
\begin{align}
&\kl{\lambda_b^{M_b}(v_{1:M_b};x)}{\eta_b^{M_b}(v_{1:M_b};x)}\\
\quad\quad&\le \kl{\lambda_b^{M_b-1}(v_{1:M_b-1};x)}{\eta_b^{M_b-1}(v_{1:M_b-1};x)}
\end{align}
Applying these inequalities to Equation~(\ref{eq:est-kl}), we have:
\begin{align}
    \mathcal{L}_{ab} \ge \E [ \hat{\mathcal{L}}_{ab}^{1,M_a,M_b} ] &\ge \E [ \hat{\mathcal{L}}_{ab}^{1,M_a-1,M_b} ]\\
    \mathcal{L}_{ab} \ge \E [ \hat{\mathcal{L}}_{ab}^{1,M_a,M_b} ] &\ge \E [ \hat{\mathcal{L}}_{ab}^{1,M_a,M_b-1} ]
\end{align}
To conclude the proof we apply these inequalities to the expectation of the AIDE estimate:
\begin{align}
    \hat{D}^{N_a,N_b,M_a,M_b} &= -\hat{\mathcal{L}}_{ab}^{N_a,M_a,M_b} - \hat{\mathcal{L}}_{ba}^{N_b,M_b,M_a}\\
    \E[\hat{D}^{N_a,N_b,M_a,M_b}] &=\E [ -\hat{\mathcal{L}}_{ab}^{1,M_a,M_b} ] + \E[-\hat{\mathcal{L}}_{ba}^{1,M_b,M_a}]\\
                &\ge -\mathcal{L}_{ab} - \mathcal{L}_{ba}\\
                &= \kl{q_a(x)}{q_b(x)} + \kl{q_b(x)}{q_a(x)}\\
    \E[\hat{D}^{N_a,N_b,M_a,M_b}] &\le  \E[\hat{D}^{N_a,N_b,M_a-1,M_b}]\\
    \E[\hat{D}^{N_a,N_b,M_a,M_b}] &\le  \E[\hat{D}^{N_a,N_b,M_a,M_b-1}]
\end{align}
\end{proof}

\section{Bias of AIDE for AIS and MH}
When the generic SMC algorithm (Algorithm~\ref{alg:smc}) is used with a single particle ($P = 1$), the algorithm becomes a Markov chain that samples from transition kernels $k_t$, and the canonical SMC meta-inference algorithm also becomes a Markov chain that samples from transition kernels $\ell_t$ in reverse order.
For this analysis we assume that $k_t = \ell_t$ and that $k_t$ satisfies detailed balance with respect to intermediate distribution $p_{t-1}$ for $t=2\ldots T$.
Then, the incremental weight simplifies to:
\begin{align}
w_t(x_{t-1},x_t)
&= \frac{\tilde{p}_t(x_t) k_t(x_{t-1};x_t)}{\tilde{p}_{t-1}(x_{t-1}) k_t(x_t;x_{t-1})}\\
&= \frac{\tilde{p}_t(x_t) }{\tilde{p}_{t-1}(x_{t-1})} \frac{\tilde{p}_{t-1}(x_{t-1})}{\tilde{p}_{t-1}(x_t)}\\
&= \frac{\tilde{p}_t(x_{t})}{\tilde{p}_{t-1}(x_{t})}
\end{align}
Under the limiting assumption that $k_t(x_t; x_{t-1}) = p_{t-1}(x_t)$, the approximation error of the canonical meta-inference algorithm becomes:
\begin{align}
\kl{q(u|x)}{r(u;x)} = \sum_{t=2}^T \kl{p_{t-1}(x)}{p_t(x)}
\end{align}
where $p_0$ is defined to be the initialization distribution $k_1$, and where $p_T(x) = p(x|y)$.
A similar result can be obtained for the other direction of divergence.
If the intermediate distributions are sufficiently fine-grained, then empirically this divergence converges to zero (as demonstrated in e.g. \citep{grosse2016measuring}).
However, in standard Markov chain Monte Carlo practice, without annealing, the intermediate distributions are $p_t = p_T$ for all $t > 0$.
In this case, the approximation error of meta-inference is the divergence between the initializing distribution and the posterior, which is generally large.
Better meta-inference algorithms that do not rely on the AIS assumption that the chain is near equilibrium at all times are needed in order for AIDE to be a practical tool for measuring the accuracy of standard, non-annealed Markov chain Monte Carlo.

\section{Evaluating SMC inference in DPMM for galaxy velocity data}
We obtained a data set of galaxy velocities based on redshift \citep{drinkwater2004large}, and randomly subsampled down to forty of the galaxies for analysis.
A histogram of the data set is shown in Figure~\ref{fig:galaxy-dataset}(a).
We consider the task of inference in a collapsed normal-inverse-gamma DPMM.
We used SMC with 100 particles, optimal (Gibbs) proposal for cluster assignments, and Metropolis-Hastings rejuvenation kernels over hyperparameters and Gibbs kernels over cluster assignments as the gold-standard inference algorithm.
Using this gold-standard, we evaluated the accuracy of SMC inference with the prior proposal, and without rejuvenation kernels using AIDE and using an alternative diagnostic based on comparing the average number of clusters in the sampling distribution relative to the average number under the gold-standard sampling distribution.
Results are shown in Figure~\ref{fig:galaxy-dataset}(b) and Figure~\ref{fig:galaxy-dataset}(c).
\begin{figure}[t]
    \centering
        \begin{minipage}[c]{0.65\textwidth}
        \begin{subfigure}[b]{1.0\textwidth}
            \centering
            \includegraphics[width=1.0\textwidth]{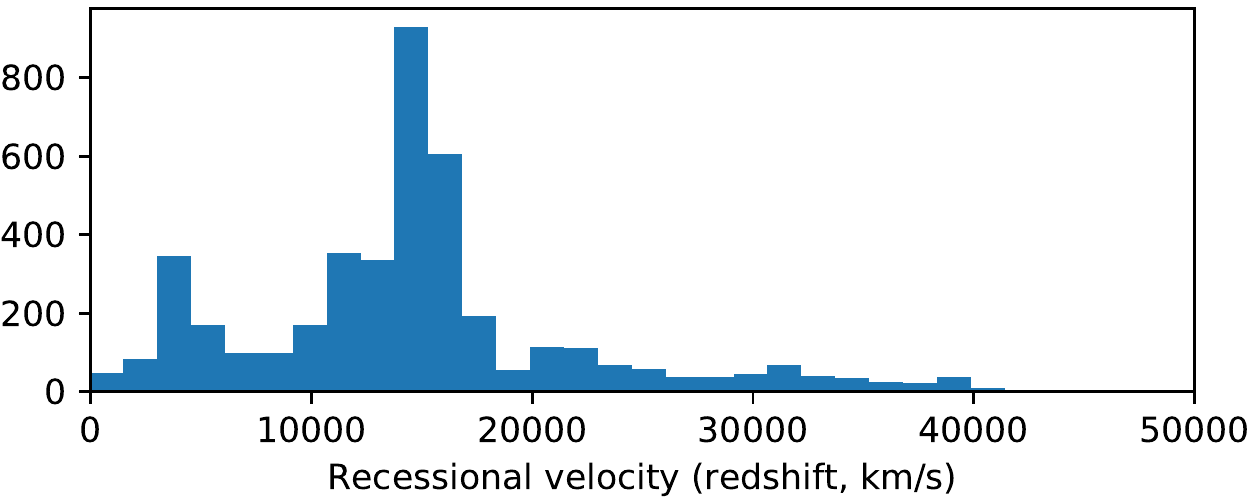}
            \caption{~}
        \end{subfigure}\\
        \begin{subfigure}[b]{0.49\textwidth}
            \centering
            \includegraphics[width=1.0\textwidth]{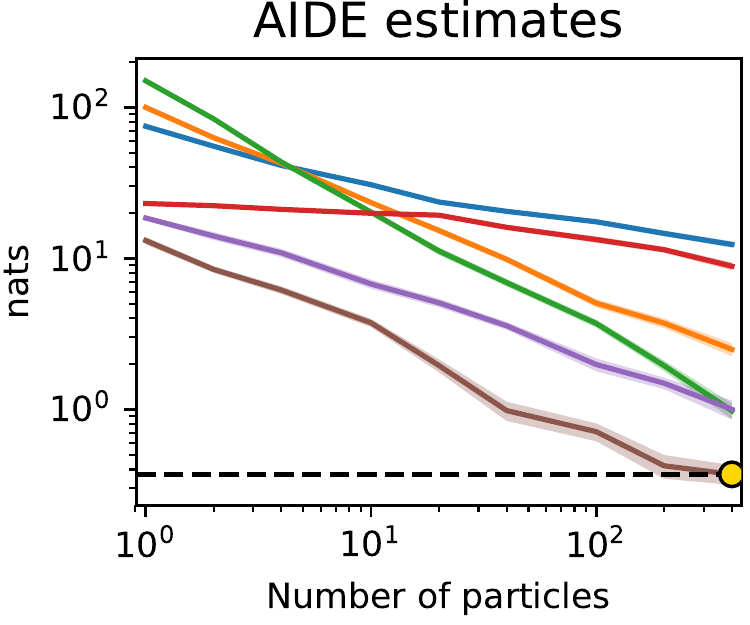}
            \caption{~}
        \end{subfigure}%
        \begin{subfigure}[b]{0.49\textwidth}
            \centering
            \includegraphics[width=1.0\textwidth]{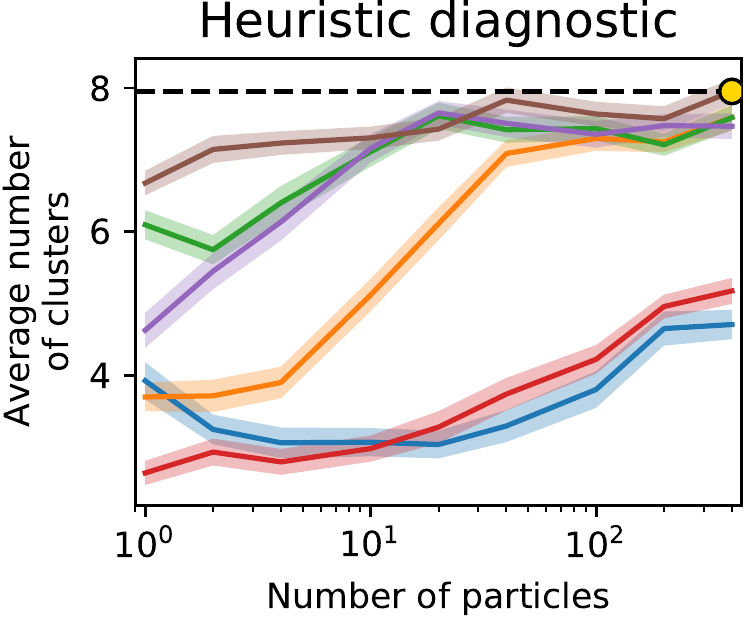} 
            \caption{~}
        \end{subfigure}%
        \end{minipage}%
        \begin{minipage}[c]{0.35\textwidth}
        \includegraphics[width=1.0\linewidth]{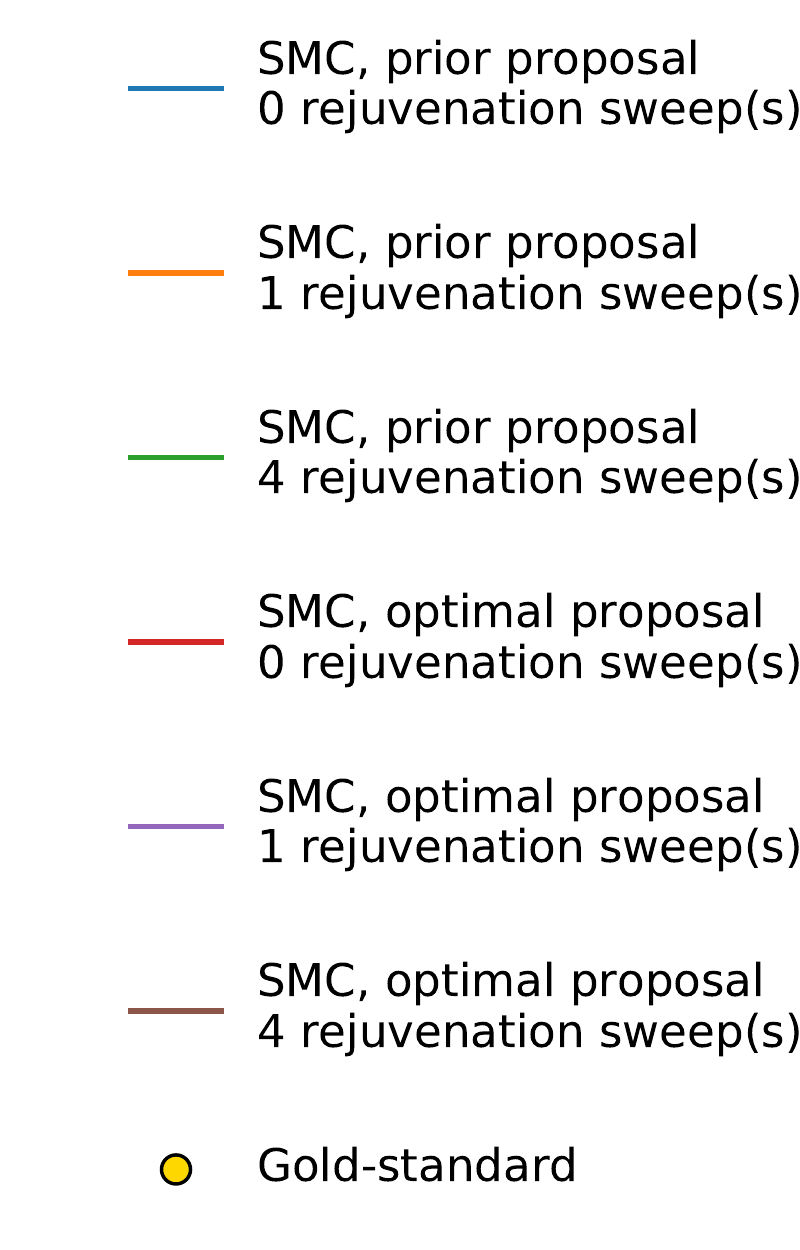}
        \end{minipage}
        \caption{
	(a) shows a histogram of velocities of galaxies from \citep{drinkwater2004large}.
	We model this data set using a Dirichlet process mixture, and evaluate the accuracy of SMC inference algorithms relative to a gold-standard, using AIDE and using a heuristic diagnostic based on measuring the average number of clusters in the approximating distribution and the gold-standard distribution.
(b) shows results of AIDE.
(c) shows result of the heuristic diagnostic.
Both techniques indicate that rejuvenation kernels are important for fast convergence.
Unlike the heuristic diagnostic, AIDE does not require custom design of a probe function for each model.
We envision AIDE being used in concert with heuristic diagnostics like (c).
In our experience, AIDE provides more conservative quantification of accuracy than heuristic diagnostics.
The experiment was performed on a subsampled set of 40 data points from the data set in (a).
}
    \label{fig:galaxy-dataset}
\end{figure}

\end{document}